%% file: main.tex
\documentclass{article} 
\usepackage[table,RGB]{xcolor}
\usepackage{sections/iclr2026_conference,times}

\usepackage[utf8]{inputenc} 
\usepackage[T1]{fontenc}    
\usepackage{url}            
\usepackage{booktabs}       
\usepackage{amsfonts}       
\usepackage{nicefrac}       
\usepackage{microtype}      
\usepackage{mathtools}
\usepackage{wrapfig}
\usepackage{bm}
\usepackage{float}
\usepackage{amsthm}
\usepackage{amsmath}
\usepackage{amssymb}
\usepackage{booktabs}
\usepackage{graphicx}
\usepackage{graphbox}
\usepackage{notes2bib}
\usepackage{multirow}
\usepackage{algorithm, algorithmic}
\usepackage{setspace}
\usepackage{makecell}
\usepackage{threeparttable}

\usepackage{tcolorbox}
\definecolor{darkerlogocolor}{RGB}{20, 0, 145}  
\newtcolorbox{ttcolorbox}[1][]{colframe=darkerlogocolor, colback=darkerlogocolor!4!white, title=#1}

\usepackage[colorlinks,linkcolor=blue]{hyperref}

\usepackage[capitalize,noabbrev]{cleveref}
\crefname{section}{Section}{Sections}
\crefname{section}{Appendix}{Appendices}
\crefname{theorem}{Theorem}{Theorems}
\crefname{lemma}{Lemma}{Lemmas}
\crefname{equation}{Equation}{Equations}
\crefname{proposition}{Proposition}{Propositions}
\crefname{claim}{Claim}{Claims}
\crefname{appendix}{Appendix}{Appendices}
\crefname{algorithm}{Algorithm}{Algorithms}
\crefname{figure}{Figure}{Figures}
\crefname{table}{Table}{Tables}
\crefname{remark}{Remark}{Remarks}
\crefname{definition}{Def.}{Definitions}
\crefname{corollary}{Corollary}{Corollaries}

\input{sections/def_global}

\usepackage{etoc}
\etocdepthtag.toc{mtchapter}
\etocsettagdepth{mtchapter}{subsection}
\etocsettagdepth{mtappendix}{none}

\usepackage{xspace}

\def \pref{p_{\mathrm{ref}}}
\def \ptilt{p_{\mathrm{tilt}}}

\title{Diffusion Alignment Beyond KL: Variance Minimisation as Effective Policy Optimiser}

\author{
Zijing Ou$^{1}$\thanks{Work done partially during the internship at Samsung R\&D Institute UK.}, Jacob Si$^{1}$\footnotemark[1], Junyi Zhu$^{2}$, Ondrej Bohdal$^{2}$, Mete Ozay$^{2}$, Taha Ceritli$^{2}$, Yingzhen Li$^{1}$ \\
$^1$Imperial College London, \ $^2$Samsung R\&D Institute UK \\
\texttt{z.ou22@imperial.ac.uk} 
}

\iclrfinalcopy 
\begin{document}

\maketitle

\vspace{-2mm}
\begin{abstract}
Diffusion alignment adapts pretrained diffusion models to sample from reward-tilted distributions along the denoising trajectory. This process naturally admits a Sequential Monte Carlo (SMC) interpretation, where the denoising model acts as a proposal and reward guidance induces importance weights.
Motivated by this view, we introduce Variance Minimisation Policy Optimisation (VMPO), which formulates diffusion alignment as minimising the variance of log importance weights rather than directly optimising a Kullback-Leibler (KL) based objective.
We prove that the variance objective is minimised by the reward-tilted target distribution and that, under on-policy sampling, its gradient coincides with that of standard KL-based alignment.
This perspective offers a common lens for understanding diffusion alignment.
Under different choices of potential functions and variance minimisation strategies, VMPO recovers various existing methods, while also suggesting new design directions beyond KL.
\end{abstract}

\input{sections/01-intro}
\input{sections/02-method}
\input{sections/03-experiment}

\input{sections/04-conclusion}

{
\bibliography{main}
\bibliographystyle{sections/icml2022}
}

\input{sections/07-appendix}

\end{document}

%% file: sections/def_global.tex
\usepackage{bm}
\usepackage{float}
\usepackage{amsthm}
\usepackage{amsmath}
\usepackage{amssymb}
\usepackage{booktabs}
\usepackage{graphicx}
\usepackage{graphbox}
\usepackage{notes2bib}
\usepackage{subcaption}
\usepackage{multirow}

\usepackage[colorlinks,linkcolor=blue]{hyperref}
\usepackage[dvipsnames]{xcolor}
\definecolor{cite_color}{HTML}{114083}
\definecolor{link_color}{RGB}{0,102,102}
\definecolor{link_color}{RGB}{153, 0,0}  
\definecolor{url_color}{RGB}{153, 102,  0}
\definecolor{emp_color}{RGB}{0,0,255}
\hypersetup{
 colorlinks,
 citecolor=cite_color,
 linkcolor=link_color,
 urlcolor=url_color}
\graphicspath{{./figs/}}

\def\E{{\mathbb E}}

\newcommand*\dif{\mathop{}\!\mathrm{d}}

\def \exp{\mathrm{exp}}

\renewcommand{\mid}{|}

\newtheorem{innercustomgeneric}{\customgenericname}
\providecommand{\customgenericname}{}
\newcommand{\newcustomtheorem}[2]{%
  \newenvironment{#1}[1]
  {%
   \renewcommand\customgenericname{#2}%
   \renewcommand\theinnercustomgeneric{##1}%
   \innercustomgeneric
  }
  {\endinnercustomgeneric}
}

\newcustomtheorem{customthm}{Theorem}

\DeclareMathOperator*{\argmax}{argmax}
\DeclareMathOperator*{\argmin}{argmin}

\usepackage{thmtools}
\usepackage{thm-restate}

\usepackage{tikz}
\usetikzlibrary{decorations.pathreplacing,calc}

\usepackage{tikz}
\newcommand{\canceltozero}[1]{%
  \tikz[baseline=(N.base)]{
    \node[inner sep=1pt, outer sep=1pt] (N) {$#1$};
    \draw[->, thin] (N.south west) -- (N.north east) node[right, scale=0.8] {0};
  }%
}

%% file: sections/01-intro.tex
\section{Introduction}
Diffusion models \citep{ho2020denoising,song2020score,rombach2022high} have recently moved beyond large-scale pretraining to incorporate downstream objectives, such as aligning with human preferences \citep{wallace2024diffusion}. 
A prominent line of work, commonly referred to as diffusion alignment, formalises this idea by adapting pretrained diffusion models such that their generated samples are biased toward high-reward regions of the data space \citep{black2023training,fan2023dpok,liu2025flow,xue2025dancegrpo}. 
This direction has recently attracted significant attention, as alignment with reward signals enables controllable generation at inference time, allowing pretrained denoising models to be steered toward desired behaviours without retraining from scratch or compromising sample quality \citep{clark2023directly,domingo2024adjoint,potaptchik2025tilt,potaptchik2026meta}.

Existing approaches, including Denoising Diffusion Policy Optimisation (DDPO) \citep{black2023training}, Diffusion policy optimisation with KL regularisation (DPOK) \citep{fan2023dpok}, and flow-based Group Relative Policy Optimisation (Flow-GRPO) \citep{liu2025flow}, typically cast diffusion alignment as a reinforcement learning problem \citep{schulman2015trust,schulman2017proximal,shao2024deepseekmath}, applying KL-regularised policy updates along the denoising trajectory. While these methods have demonstrated strong empirical performance, they are often derived from specific optimisation viewpoints, which can obscure their underlying connections and make it difficult to systematically design new variants. In addition, these formulations emphasise matching reward-tilted target distributions via KL minimisation, while alternative optimisation criteria remain relatively underexplored.

In this work, we revisit diffusion alignment through the lens of Sequential Monte Carlo (SMC) \citep{del2006sequential}. By interpreting the denoising process as a proposal distribution and reward guidance as inducing importance weights, we show that diffusion alignment naturally gives rise to a variance minimisation problem \citep{richter2020vargrad}.
Motivated by this observation, we propose Variance Minimisation Policy Optimisation (VMPO), which frames alignment as minimising the variance of log importance weights along the trajectory. We analyse its theoretical properties and show that, under on-policy sampling, its gradient recovers that of standard KL-based alignment.
Beyond providing a principled interpretation, this perspective offers a flexible framework: different choices of potential functions and variance objectives lead to optimisation rules closely related to existing methods, while also suggesting new design directions beyond KL regularisation. 
Empirically, we demonstrate the effectiveness of VMPO by finetuning Stable Diffusion 1.5 and 3.5 \citep{rombach2022high,esser2024scaling} across a wide range of reward functions.
We hope that this variance-minimisation viewpoint provides a useful conceptual tool for understanding diffusion alignment and for guiding the development of future alignment methods.

%% file: sections/02-method.tex
\section{Background: Diffusion Alignment}

Given a pretrained diffusion model $p_{\mathrm{ref}}(x_{t-1}|x_t)$ and a reward model $r$, diffusion alignment aims at drawing samples from the reward-tilted target $\ptilt (x_{t\!-\!1} | x_t) \!\propto\! p_{\mathrm{ref}}(x_{t\!-\!1}|x_t)\exp(\!\frac{r(x_{t\!-\!1})}{\beta}\!)$
\footnote{We omit the condition $c$ when clear from context. In most settings, the reward function is defined only on the clean sample $x_0$. We defer the discussion of how to construct intermediate reward $r(x_t)$ to \cref{sec:appendix-vmpo-kaleidoscopes}}.
A naive way to sample from $\ptilt$ is to apply self-normalised importance sampling (SNIS) \citep{owen2013monte}:
\begin{align}
    \!\!\!\!x_{t-1} \!\sim\! \mathrm{Cat} \!\left(\!x^i; \frac{w^i_t}{\sum_i w^i_t}\right)\!,\ \log w^i_t \!=\! \log \frac{p_{\mathrm{ref}}(x^i_{t-1} | x_t^i)\exp(r(x^i_{t-1})/\beta)}{p_\theta (x^i_{t-1} | x_t^i)}\!,\ x^i_{t-1} \!\sim\! p_\theta(x^i_{t-1} | x_t^i). \!\!
\end{align}
While straightforward, SNIS suffers from high variance with a poor proposal $p_\theta$, which precipitates particle degeneracy and hinders adequate exploration of the state space \citep{del2006sequential}. Theoretically, the optimal proposal with zero variance is $p_\theta (x_{t-1}|x_t) \propto p_{\mathrm{ref}}(x_{t-1} | x_t)\exp(r(x_{t-1})/\beta)$, which inspires us to learn an optimal proposal by minimising a divergence between $p_\theta$ and $\ptilt$. A popular choice is the KL-divergence \citep{kullback1951information} (see \cref{sec:diffusion-alignment-prob-inf} for the derivation):
\begin{align} \label{eq:kl-reward-align}
    \argmin_\theta \mathbb{KL}\left(p_\theta || \ptilt \right) 
    \Leftrightarrow 
    \argmax_\theta \E_{p_\theta} [r(x_{t-1})] - \beta \mathbb{KL}(p_\theta (x_{t-1} | x_t) || p_{\mathrm{ref}}(x_{t-1} | x_t)).
\end{align}
To optimise $p_\theta$ using \cref{eq:kl-reward-align}, one can apply the policy gradient \citep{sutton1999policy}, which induces the following surrogate objective:
\begin{align} \label{eq:kl-policy-gradient}
    \mathcal{J}_{\mathrm{KL}}(t; \theta) = \E_{p_{\theta_\mathrm{old}}}\left[\frac{p_\theta(x_{t-1} | x_t)}{p_{\theta_\mathrm{old}} (x_{t-1} | x_t)} r(x_{t-1})\right] - \beta \mathbb{KL}(p_\theta (x_{t-1} | x_t) || p_{\mathrm{ref}}(x_{t-1} | x_t)).
\end{align}
\cref{eq:kl-policy-gradient} has been widely adopted in reinforcement learning algorithms, such as proximal policy optimisation (PPO) \citep{schulman2017proximal} and GRPO \citep{shao2024deepseekmath}, which have recently been adapted in diffusion alignment, such as DDPO \citep{black2023training} and Flow-GRPO \citep{liu2025flow} (see \cref{sec:appendix-align-two-view} for a brief review). Rather than following the KL perspective, in the following, we propose a new diffusion alignment method from the perspective of variance minimisation.

\section{VMPO: Variance Minimisation Policy Optimiser} \label{sec:vmpo}
In this section, we introduce VMPO: a policy optimiser with variance minimisation. 
Rather than directly maximising expected reward, VMPO learns the policy $p_\theta$ by minimising the variance of importance weights \citep{richter2020vargrad}
\footnote{For ease of exposition, we focus only on consecutive timesteps $(t-1, t)$. In \cref{sec:appendix-demystify-vmpo}, we justify the variance minimisation objective from the lens of SMC, and further show that $T \sum_{t=1}^T \mathcal{L}_h^{\mathrm{Var}}(t;\theta)$ upper-bounds $\frac{1}{2}\mathbb{V}_h(\log \frac{p_{\mathrm{ref}}(x_{0:T})}{p_\theta (x_{0:T})} + \frac{1}{\beta} \sum_{t=0}^T r(x_t))$, i.e., the sum of local variance upper-bounds the variance of the importance weight of the joint trajectory distribution.}
\begin{align} \label{eq:log-var-loss}
    \!\!\!\!\!\!\mathcal{L}^{h}_{\mathrm{Var}} \!(t;\theta) \!\!=\!\! \frac{1}{2} \!\mathbb{V}_{\!h} \! ( \log \! w_t ) \!=\!\frac{1}{2} \E_h \!\! \left[\! \log\! \frac{p_{\mathrm{ref}}(x_{t\!-\!1} | x_t) \exp(\frac{r(x_{t\!-\!1})}{\beta})}{p_\theta (x_{t\!-\!1} | x_t)} \!\!-\! \E_h \!\!\left[\! \log\! \frac{p_{\mathrm{ref}}(x_{t-1} | x_t) \exp(\frac{r(x_{t\!-\!1})}{\beta})}{p_\theta (x_{t-1} | x_t)}\!\right]\! \right]^2\!\!\!\!\!
\end{align}
where $h$ is an arbitrary reference distribution sharing the same support as $p_\theta$ and $p_{\mathrm{ref}}$. The validity of the loss \cref{eq:log-var-loss} is formalised by the following proposition:
\begin{restatable}[]{proposition}{restapropone} \label{prop:optimal-proposal}
    The optimum of $\mathcal{L}^{h}_{\mathrm{Var}}(t;\theta)$ satisfies $p_{\theta^*} = \ptilt(x_{t-1} | x_t)$, $\theta^* = \argmin_\theta \mathcal{L}^{h}_{\mathrm{Var}}(t;\theta)$. Moreover, $ \left. \nabla_\theta \mathcal{L}^{h}_{\mathrm{Var}}(t;\theta) \right|_{h=p_\theta} = \nabla_\theta \mathbb{KL}(p_\theta (x_{t-1} | x_t) || \ptilt(x_{t-1} | x_t))$.
\end{restatable}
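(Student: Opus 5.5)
For the first claim, I will rewrite the log-weight as $\log w_t = \log \frac{\ptilt(x_{t-1}|x_t)}{p_\theta(x_{t-1}|x_t)} + \log Z(x_t)$. Here $Z(x_t)=\int p_{\mathrm{ref}}(x_{t-1}|x_t)\exp(r(x_{t-1})/\beta)\,\dif x_{t-1}$ is the normaliser of $\ptilt$. Since $x_t$ is held fixed, $\log Z(x_t)$ is a constant and drops out of the variance, so
\begin{align*}
\mathcal{L}^h_{\mathrm{Var}}(t;\theta)=\tfrac12\mathbb{V}_h\!\left(\log \tfrac{\ptilt}{p_\theta}\right).
\end{align*}
This is nonnegative. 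It equals zero if and only if $\log \ptilt/p_\theta$ is $h$-a.s.\ constant, say equal to $c$. Because $h$ has the same support as $p_\theta$ and $p_{\mathrm{ref}}$, this gives $\ptilt = e^{c} p_\theta$ on that support. Both sides integrate to one, which forces $c=0$ and hence $p_\theta=\ptilt$. So the global minimiser, assuming the family is expressive enough to contain $\ptilt$, is $p_{\theta^*}=\ptilt$. I will state this realisability assumption explicitly.

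For the gradient claim, I treat $h$ as fixed (stop-gradient) while differentiating and only afterwards set $h=p_\theta$. Write $f_\theta=\log w_t$, so that $\nabla_\theta f_\theta = -\nabla_\theta\log p_\theta$. Then
\begin{align*}
\nabla_\theta \mathcal{L}^h_{\mathrm{Var}} = \E_h\!\left[(f_\theta-\E_h f_\theta)(\nabla_\theta f_\theta - \E_h\nabla_\theta f_\theta)\right] = -\E_h\!\left[(f_\theta-\E_h f_\theta)\,\nabla_\theta \log p_\theta\right] + \E_h[f_\theta-\E_h f_\theta]\,\E_h[\nabla_\theta\log p_\theta].
\end{align*}
The last term vanishes because $\E_h[f_\theta-\E_h f_\theta]=0$.

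Next I set $h=p_\theta$ and use the score identity $\E_{p_\theta}[\nabla_\theta\log p_\theta]=0$. This lets me remove the baseline $\E_h f_\theta$, leaving $-\E_{p_\theta}[f_\theta\nabla_\theta\log p_\theta]$.

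Independently, I differentiate the KL through both the sampling density and the integrand:
\begin{align*}
\nabla_\theta\mathbb{KL}(p_\theta\|\ptilt)=\E_{p_\theta}\!\left[\log\tfrac{p_\theta}{\ptilt}\nabla_\theta\log p_\theta\right]+\E_{p_\theta}[\nabla_\theta\log p_\theta].
\end{align*}
The second term is zero by the same score identity. We also have $\log\frac{p_\theta}{\ptilt}=-f_\theta+\log Z(x_t)$, and the constant $\log Z(x_t)$ is killed by the score identity as well. The KL gradient therefore equals $-\E_{p_\theta}[f_\theta\nabla_\theta\log p_\theta]$, which matches the expression obtained above.

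The main subtlety is the order of operations: the gradient must be taken with $h$ frozen, and only then evaluated at $h=p_\theta$. I will state this convention explicitly, since differentiating through $h$ would add a score-function term. The other points requiring care are justifying the interchange of differentiation and integration (standard regularity assumptions) and the support condition used in the first part.
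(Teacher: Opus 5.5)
Your proof is correct and follows the paper's argument. Zero variance forces the importance weight to be constant, so $p_{\theta^*}\propto p_{\mathrm{ref}}\exp(r/\beta)$; for the gradient, you differentiate with $h$ frozen, set $h=p_\theta$, and remove the baseline term with the score identity. The only differences are cosmetic: you match $\nabla_\theta\mathbb{KL}(p_\theta\|\ptilt)$ directly using the constant $\log Z(x_t)$, where the paper splits it into $-\tfrac{1}{\beta}\nabla_\theta\E_{p_\theta}[r]+\nabla_\theta\mathbb{KL}(p_\theta\|p_{\mathrm{ref}})$, and you state explicitly the realisability, normalisation ($c=0$) and fixed-$x_t$ conventions that the paper leaves implicit.
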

In practice, $\mathcal{L}^{h}_{\mathrm{Var}}$ can be estimated using Monte Carlo samples:
\begin{align} \label{eq:log-var-monte-carlo}
    \hat{\mathcal{L}}^{h}_{\mathrm{Var}}(t;\theta) \!:=\! \frac{1}{2(K-1)} \sum_{i=1}^K \left( 
    \!\underbrace{\!
        \log R_\theta (x^i_{t-1} | x_t^i) \!+\! \frac{1}{\beta} r(x^i_{t-1}) \!-\! \overline{\log R_\theta (x_{t-1} | x_t)} \!-\! \frac{1}{\beta} \overline{r(x_{t-1})}
    }_{:= A_t^i (\theta)} 
    \!\right)^2\!,
\end{align}
where $x^i_t \sim h$, $\overline{\log R_\theta (x_{t-1} | x_t)} = \frac{1}{N} \sum_{i=1}^K \log \frac{p_{\mathrm{ref}}(x_{t-1} | x_t))}{p_\theta (x_{t-1} | x_t)}$, and $\overline{r(x_{t-1})} = \frac{1}{K} \sum_{i=1}^K r(x^i_{t-1})$.
It is noteworthy that although $\hat{\mathcal{L}}^{h}_{\mathrm{Var}}(t;\theta)$ is a biased estimator of $\mathcal{L}^h_{\mathrm{Var}}(t;\theta)$ due to the quadratic nonlinearity of $(\cdot)^2$, its gradient, which has the form of
\begin{align} \label{eq:graidnet-monte-carlo-gradient}
    \nabla_\theta \hat{\mathcal{L}}^{h}_{\mathrm{Var}}(t;\theta) 
    = \frac{1}{(K-1)} \sum_{i=1}^K - A_t^i (\theta)  \nabla_\theta \log p_\theta (x_{t-1} | x_t),
\end{align}
is an unbiased estimator to $- \nabla_\theta \mathcal{J}_{\mathrm{KL}}(t;\theta)$ if we set $h=p_\theta$.
\cref{eq:log-var-monte-carlo} induces a loss to train the policy: $\argmin_\theta \frac{1}{(N-1)} \sum_{i=1}^K \sum_{t=1}^T - A_t^i (\theta_{\mathrm{sg}}) \log p_\theta (x_{t-1} | x_t)$, which closely resembles the GRPO objective (see \cref{sec:proof-unbias-gradient} for details).
Although straightforward, simple Monte Carlo estimation of the expectation $\E_h [\log w_t]$ typically has high variance, which in turn requires a high number of Monte Carlo samples to improve training stability, but reduces the training efficiency.
Alternatively, we can instead amortise this expectation using a learnable function approximator. Concretely, we introduce a neural estimator $M_\phi(t)$ trained by
\begin{align}
    \phi^* = \argmin_\phi \E_h \left[ \left(\log R_\theta (x_{t-1} | x_t) \!+\! \frac{1}{\beta} r(x_{t-1}) - M_\phi (t) \right)^2 \right].
\end{align}
By the identity $\E_h [w] = \argmin_M \E_h [(w - M)^2]$, the optimal satisfies $M_{\phi^*}(t) = \E_h [\log w_t]$. Substituting this estimator into the variance objective yields the VMPO loss
\begin{align} \label{eq:vmpo-amor-z-obj}
    \mathcal{L}_{\mathrm{VMPO}} (\theta, \phi) = \E_{h, t} \left[ \left(\log R_\theta (x_{t-1} | x_t) \!+\! \frac{1}{\beta} r(x_{t-1}) - M_\phi (t) \right)^2 \right]
\end{align}
\begin{wrapfigure}{r}{0.45\linewidth}
\vspace{-4mm}
\centering
    \vspace{-4mm}
    \begin{minipage}[t]{0.99\linewidth}
    \centering
    \begin{algorithm}[H]
    \caption{VMPO} \small
    \label{alg:vmpo} 
    \setstretch{1.05}
    \begin{algorithmic}[1] 
        \REQUIRE Reference model $\pref$, denoising policy $p_\theta$, mean estimator $M_\phi$, reward function $r$
        \WHILE{not converged}
            \STATE Rollout $\tau = \{x_t\}_t$ with $p_\theta (x_{t-1} | x_t)$
            \STATE For each transition $(x_{t-1}, x_t) \in \tau$: \\update $\theta, \phi$ with \cref{eq:vmpo-amor-z-obj}
        \ENDWHILE
    \end{algorithmic}
    \end{algorithm}
    \end{minipage}
\label{fig:algorithms}
\end{wrapfigure}
We summarise the training procedure in \cref{alg:vmpo}.
For the choice of intermediate reward $r(x_t)$, we consider two options: return-to-go and difference,  in \cref{sec:appendix-vmpo-kaleidoscopes}, which lead to two variants of our method: VMPO-R2G and VMPO-Diff, respectively.
Moreover, in \cref{sec:appendix-conect-related-work}, we further show that VMPO serves as a kaleidoscopic policy optimiser, connecting to a broad class of existing diffusion alignment methods through different choices of variance minimisation strategies.

%% file: sections/03-experiment.tex
\begin{figure}[!t]
    \centering
    \begin{minipage}{0.99\linewidth}
        \centering
        {\scriptsize ---\ Anthropomorphic Virginia opossum playing guitar. --->}
        \includegraphics[width=.99\linewidth]{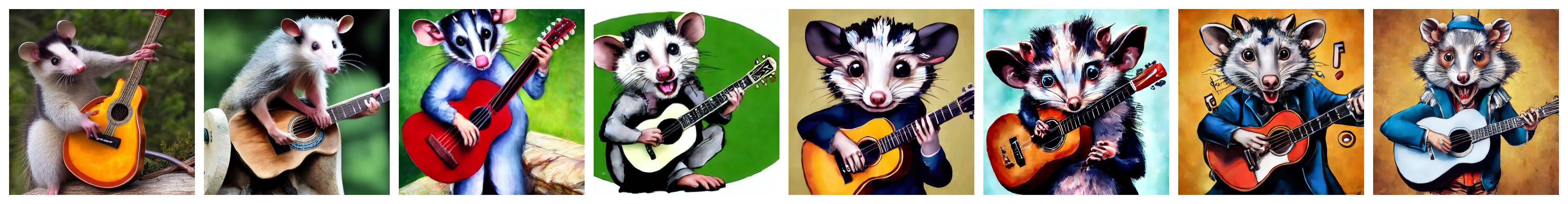}
    \end{minipage}
    \begin{minipage}{0.99\linewidth}
        \centering
        {\scriptsize ---\ The image features a castle surrounded by a dreamy garden with roses and a cloudy sky in the background. --->}
        \includegraphics[width=.99\linewidth]{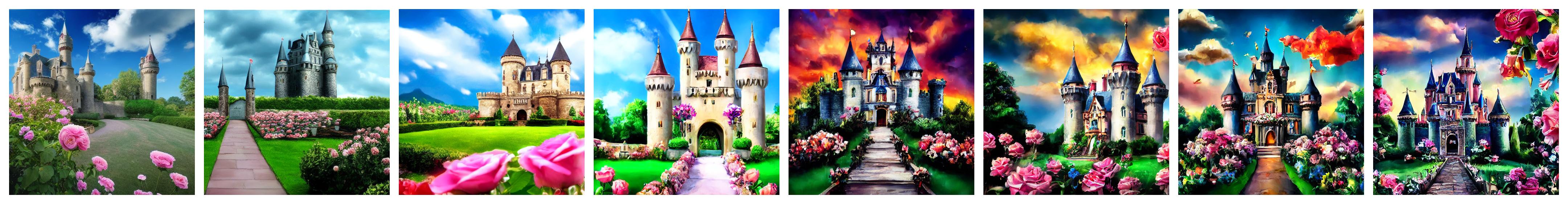}
    \end{minipage}
    \vspace{-2mm}
    \caption{Visualisation of alignment dynamics over the training progress of SD1.5 with HPSv2. The generated images become more faithful to the prompt as the training continues (from left to right).}
    \label{fig:t2i-hpsv2-training}
    \vspace{-4mm}
\end{figure}

\section{Experiments}

We evaluate VMPO by aligning Stable Diffusion v1.5 \citep{rombach2022high} using Human Preference Score (HPSv2) \citep{wu2023human}. Details of the experimental setup, along with additional results on Stable Diffusion v3.5 \citep{esser2024scaling} and other reward functions, are deferred to \cref{sec:appendix-additional-exp}.

\textbf{Setup.}
For training, we adopt low-rank adaptation (LoRA) \citep{hu2022lora} and leverage the photo and painting prompts from \cite{wu2023human} as training prompts.
For testing, we consider the 100 test prompts\footnote{\scriptsize \url{https://github.com/microsoft/soc-fine-tuning-sd/blob/main/prompt_files/benchmark_ir.json}} collected by \cite{domingo2024adjoint}. For each prompt, we generate 10 images and evaluate the CLIPScore \citep{hessel2021clipscore}, ImageReward \citep{xu2023imagereward}, and DreamSim \citep{fu2023dreamsim} as the reported metrics.

\begin{table}[!t] 
    \centering
    \caption{Results on Stable Diffusion v1.5 fine-tuned with HPSv2.}
    \label{tab:hps-results}
    \vspace{-3mm}
    \begin{tabular}{lcccc}
        \toprule 
        \textbf{Method} & \textbf{HPSv2} ($\uparrow$) & \textbf{CLIPScore} ($\uparrow$) & \textbf{ImageReward} ($\uparrow$) & \textbf{DreamSim} ($\uparrow$) \\
        \midrule 
        SD1.5 (Base) &
        0.2368 ± 0.0029 & 0.2717 ± 0.0032	& 0.0331 ± 0.0779 & 0.4389 ± 0.0116 \\
        GRPO &  
        0.2684 ± 0.0035 & 0.2653 ± 0.0034	   & 0.3449 ± 0.0758	 & 0.3220 ± 0.0098 \\
        VMPO-R2G &  
        0.2723 ± 0.0032 & 0.2713 ± 0.0030	   & 0.3427 ± 0.0762	 & 0.3673 ± 0.0115 \\
        VMPO-Diff &  
        0.2822 ± 0.0040 & 0.2622 ± 0.0028	   & 0.4973 ± 0.0780	 & 0.2916 ± 0.0104 \\
        \bottomrule 
    \end{tabular}
\end{table}

\begin{figure}[!t]
    \vspace{-3mm}
    \centering
    \begin{minipage}{.99\linewidth}
        \centering
        \begin{minipage}{.245\linewidth}\centering SD-1.5 \end{minipage}
        \hfill
        \begin{minipage}{.245\linewidth}\centering GRPO \end{minipage}
        \hfill
        \begin{minipage}{.245\linewidth}\centering VMPO-R2G \end{minipage}
        \hfill
        \begin{minipage}{.245\linewidth}\centering VMPO-Diff \end{minipage}
    \end{minipage}
    \begin{minipage}{0.99\linewidth}
        \centering
        \begin{minipage}{0.245\linewidth}
            \centering
            \includegraphics[width=.99\linewidth]{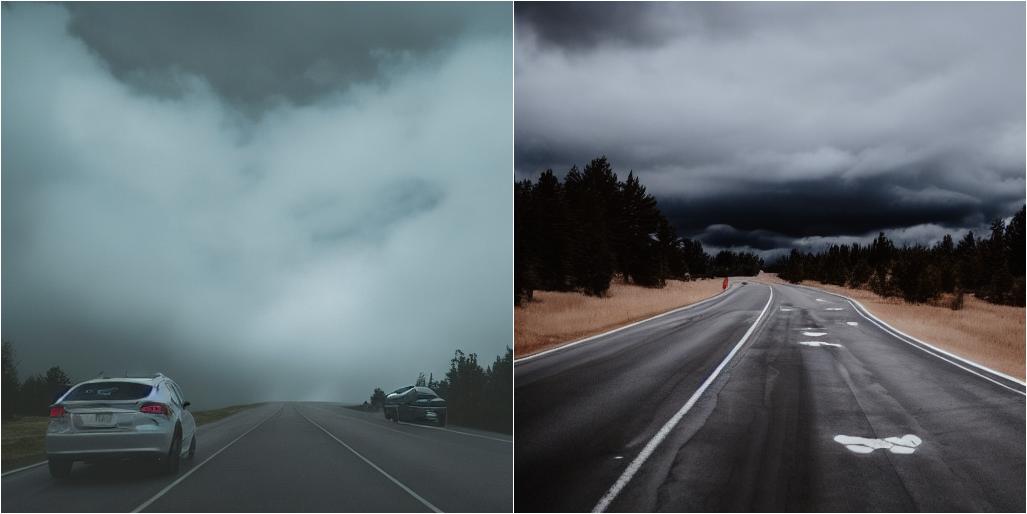}
        \end{minipage}
        \hfill
        \begin{minipage}{0.245\linewidth}
            \centering
            \includegraphics[width=.99\linewidth]{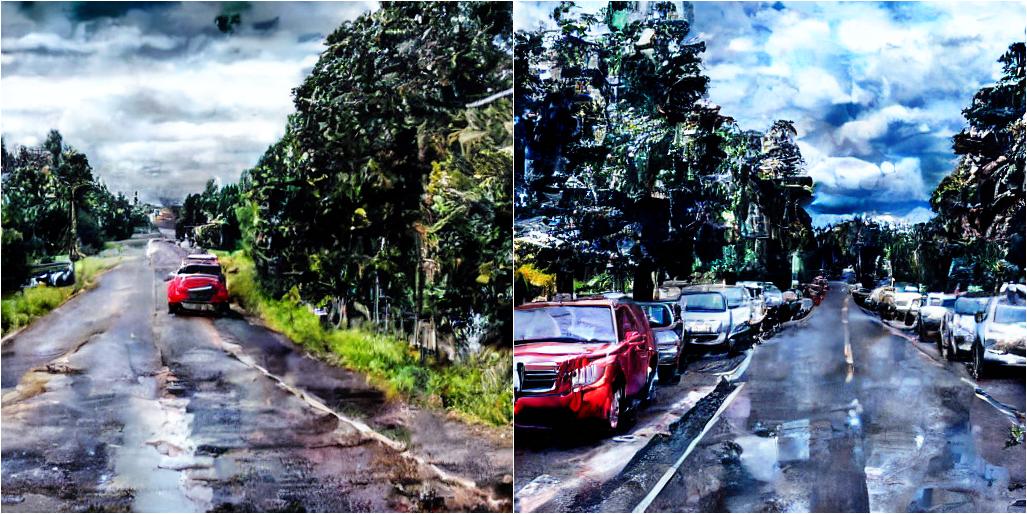}
        \end{minipage}
        \hfill
        \begin{minipage}{0.245\linewidth}
            \centering
            \includegraphics[width=.99\linewidth]{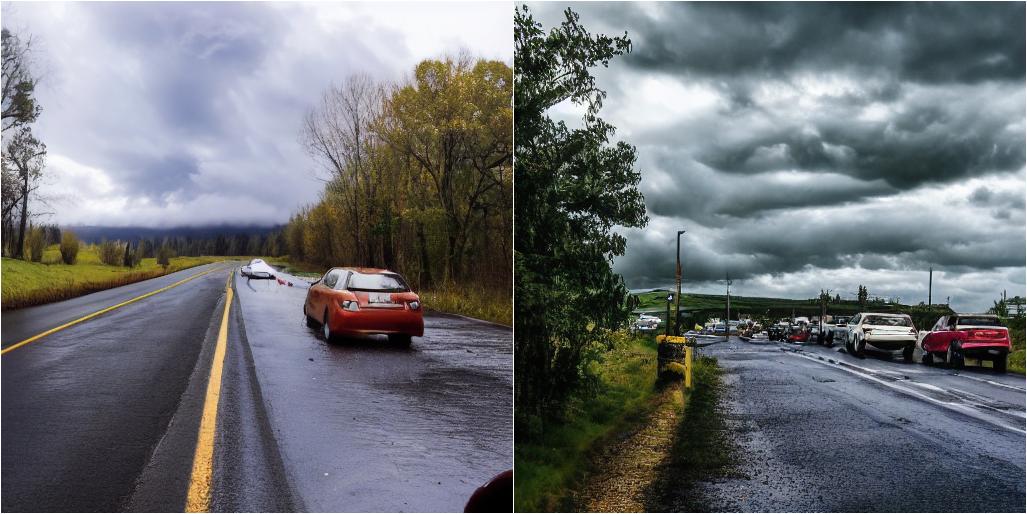}
        \end{minipage}
        \hfill
        \begin{minipage}{0.245\linewidth}
            \centering
            \includegraphics[width=.99\linewidth]{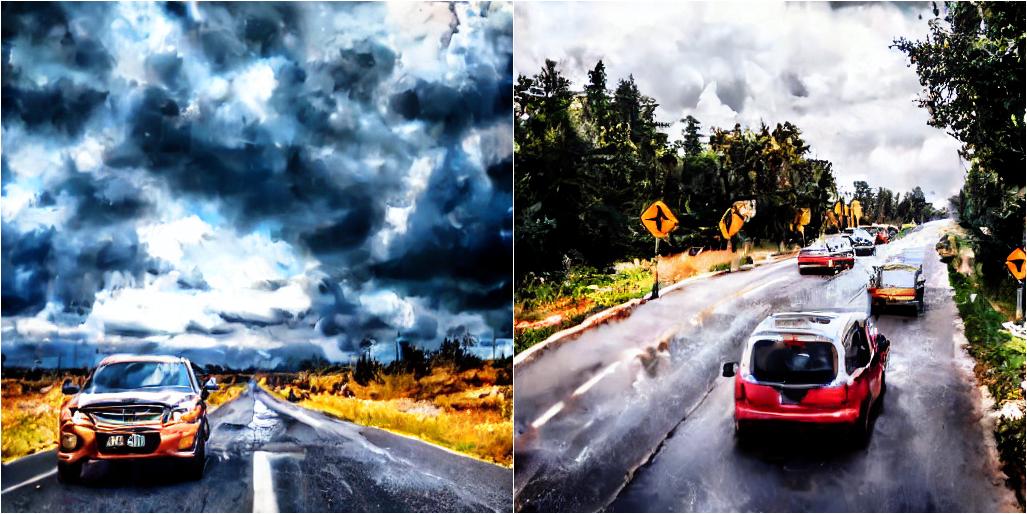}
        \end{minipage}
        \begin{minipage}{1.\linewidth}
            \smallskip
            \centering
            {\footnotesize Several cars drive down the road on a cloudy day.\par}
        \end{minipage}
    \end{minipage}
    \\ \vspace{.5mm}
    \begin{minipage}{0.99\linewidth}
        \centering
        \begin{minipage}{0.245\linewidth}
            \centering
            \includegraphics[width=.99\linewidth]{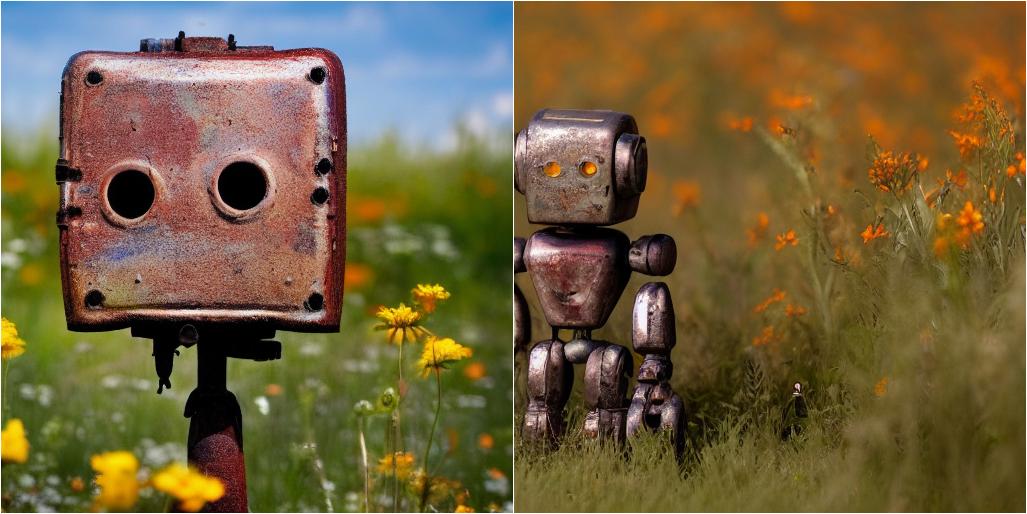}
        \end{minipage}
        \hfill
        \begin{minipage}{0.245\linewidth}
            \centering
            \includegraphics[width=.99\linewidth]{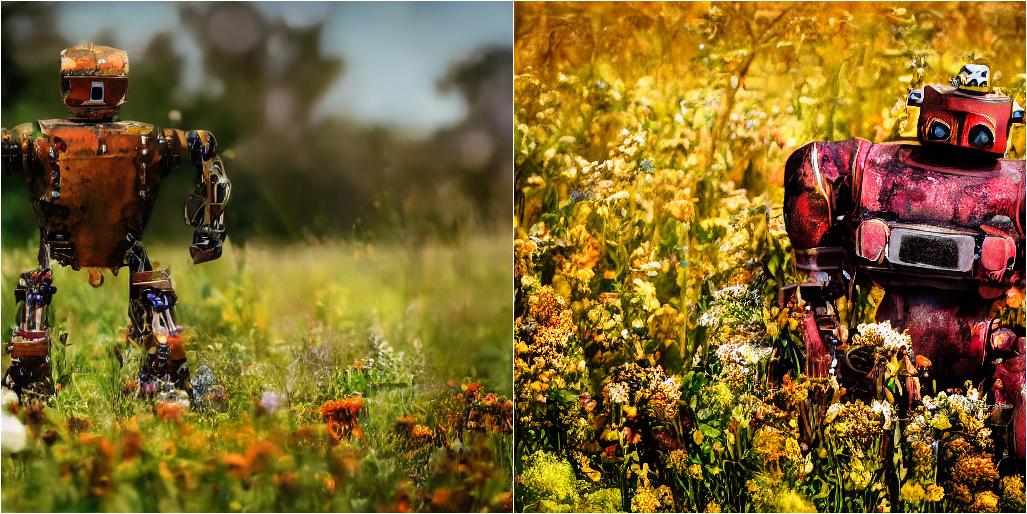}
        \end{minipage}
        \hfill
        \begin{minipage}{0.245\linewidth}
            \centering
            \includegraphics[width=.99\linewidth]{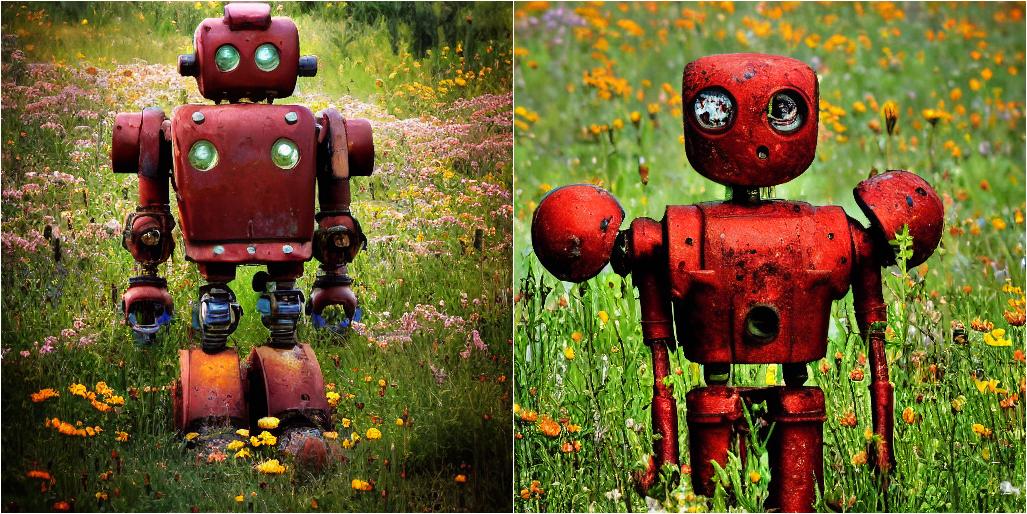}
        \end{minipage}
        \hfill
        \begin{minipage}{0.245\linewidth}
            \centering
            \includegraphics[width=.99\linewidth]{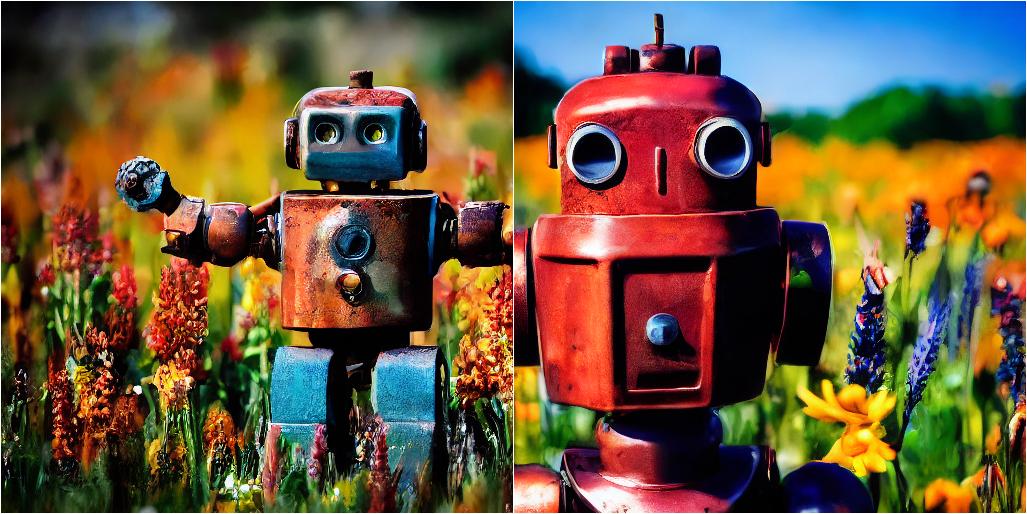}
        \end{minipage}
        \begin{minipage}{1.\linewidth}
            \smallskip
            \centering
            {\footnotesize A rusted robot lying in a field of wildflowers, ultra-detailed, shallow depth of field.\par}
        \end{minipage}
    \end{minipage}
    \\ \vspace{.5mm}
    \begin{minipage}{0.99\linewidth}
        \centering
        \begin{minipage}{0.245\linewidth}
            \centering
            \includegraphics[width=.99\linewidth]{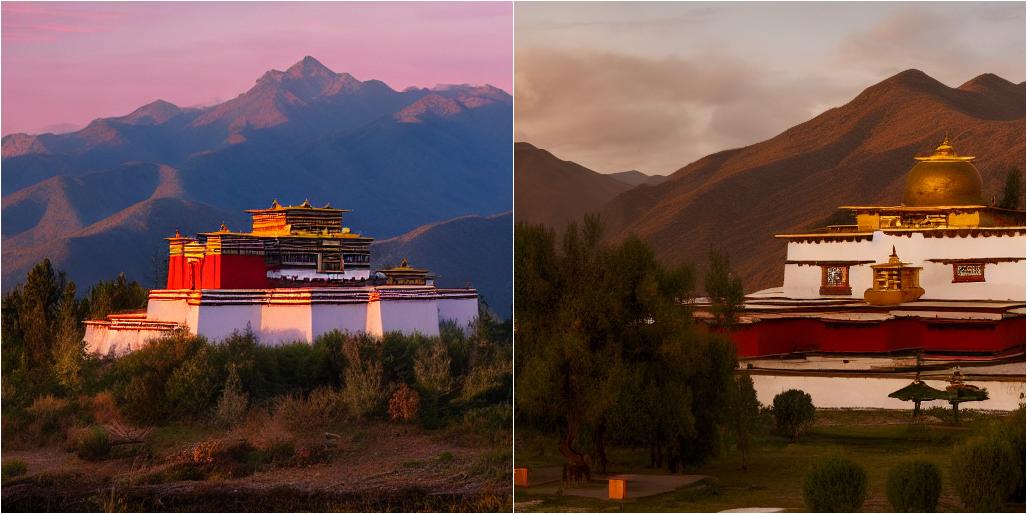}
        \end{minipage}
        \hfill
        \begin{minipage}{0.245\linewidth}
            \centering
            \includegraphics[width=.99\linewidth]{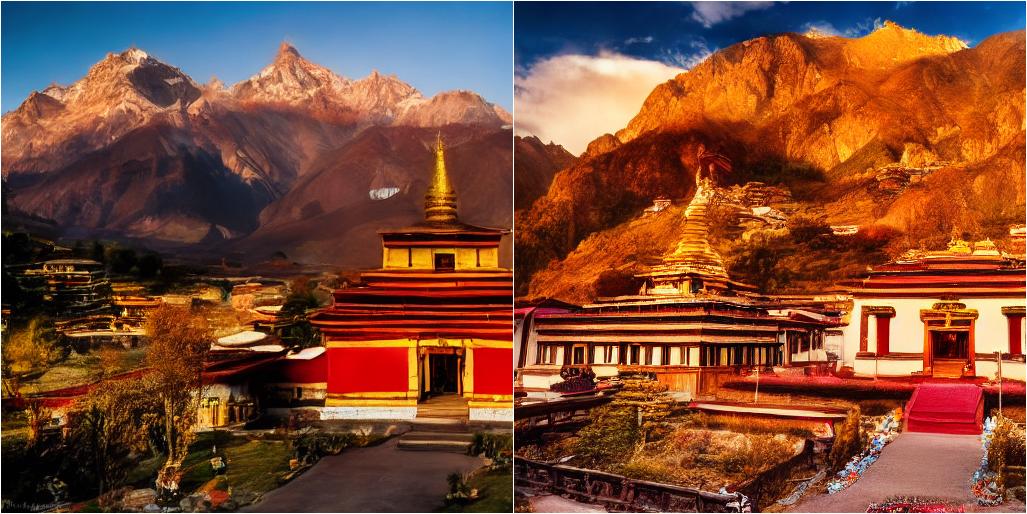}
        \end{minipage}
        \hfill
        \begin{minipage}{0.245\linewidth}
            \centering
            \includegraphics[width=.99\linewidth]{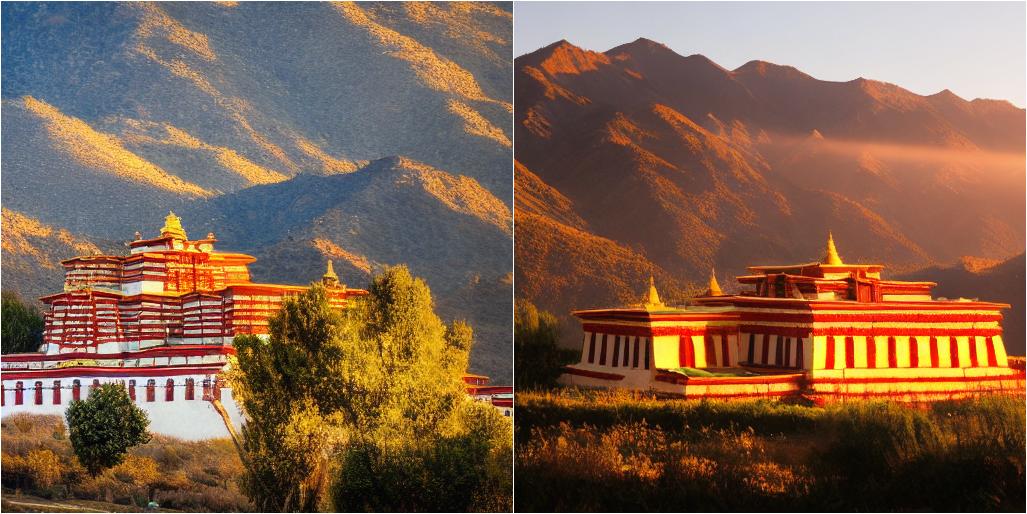}
        \end{minipage}
        \hfill
        \begin{minipage}{0.245\linewidth}
            \centering
            \includegraphics[width=.99\linewidth]{figures/gen_samples/sd1.5/samples/A_quiet_Tibetan_monastery_at_sunrise_mountains_in_the_background_soft_golden_light_vmpor2g.jpeg}
        \end{minipage}
        \begin{minipage}{1.\linewidth}
            \smallskip
            \centering
            {\footnotesize A quiet Tibetan monastery at sunrise, mountains in the background, soft golden light.\par}
        \end{minipage}
    \end{minipage}
    \vspace{-2mm}
    \caption{Illustration of the generated samples of different models.}
    \label{fig:hps-samples-comp}
    \vspace{-4mm}
\end{figure}

\begin{wrapfigure}{r}{0.47\linewidth}
    \centering
    \vspace{-2mm}
    \includegraphics[width=.98\linewidth]{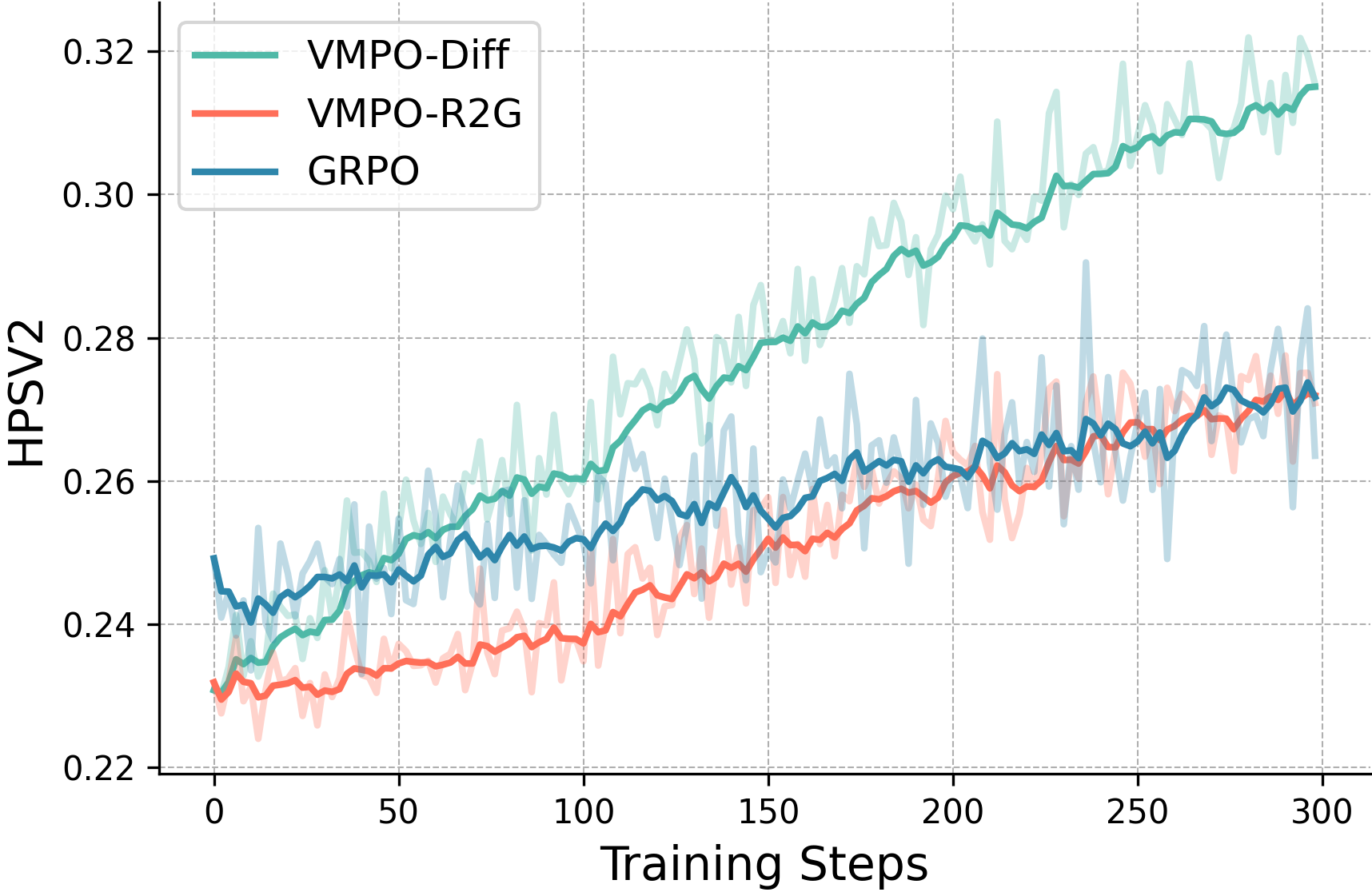}
    \vspace{-2mm}
    \caption{HPSv2 convergence curves of SD1.5.}
    \label{fig:hps_curve}
    \vspace{-3mm}
\end{wrapfigure}
\textbf{Results and Discussion.}
We present the reward convergence curves in \cref{fig:hps_curve}. As shown in the figure, both VMPO and GRPO steadily improve HPSv2 over training, indicating effective reward optimisation. Notably, VMPO-Diff consistently attains higher reward values throughout training, suggesting improved sample efficiency for reward alignment compared to GRPO and VMPO-R2G.
To further illustrate the alignment dynamics, \cref{fig:t2i-hpsv2-training} visualises the evolution of generated images during training, where the images progressively align more faithfully to the prompts, qualitatively validating the effectiveness of VMPO.
In \cref{tab:hps-results}, we quantitatively benchmark VMPO against GRPO. It shows that VMPO achieves higher HPSv2 and ImageReward scores, which are both preference-trained, CLIP-based rewards.
However, we observe a drop in CLIPScore and DreamSim, which respectively measure text–image alignment and diversity, indicating that both VMPO and GRPO are susceptible to reward hacking.
We illustrate this effect with representative samples in \cref{fig:hps-samples-comp} and additional examples provided in \cref{fig:hps-samples-comp-appendix}.

%% file: sections/04-conclusion.tex
\section{Conclusion and Limitation}
In this paper, we proposed variance minimisation policy optimiser (VMPO), a diffusion alignment method that learns the denoising policy by minimising the variance of importance weights. 
VMPO closely resembles proximal policy optimisation when Monte Carlo estimation is employed, while retaining substantial flexibility. By choosing different potential functions along sampling trajectories and alternative variance minimisation objectives, VMPO naturally connects to several existing approaches.  
Empirically, we demonstrated the effectiveness of VMPO by fine-tuning Stable Diffusion v1.5 and v3.5. We further discuss limitations and directions for future work in \cref{sec:appendix-limitation-future-work}.

%% file: sections/07-appendix.tex
\newpage
\appendix

\begin{center}
\LARGE
\textbf{Appendix for ``Diffusion Alignment Beyond KL: Variance Minimisation as Effective Policy Optimiser''}
\end{center}

\etocdepthtag.toc{mtappendix}
\etocsettagdepth{mtchapter}{none}
\etocsettagdepth{mtappendix}{subsection}
{\small \tableofcontents}

\section{Diffusion Alignment: a Tale of Two Views} \label{sec:appendix-align-two-view}

Let $q(x_0)$ be the data distribution, denoising diffusion models \citep{sohl2015deep,ho2020denoising,kingma2021variational} construct a Markovian process $q(x_{0:T}) = q(x_0)\prod_{t=1}^T q(x_t | x_{t-1})$, with $q(x_{t} | x_{t-1}) = \mathcal{N}(\alpha_t / \alpha_{t-1} x_t, (1 - \alpha_t^2/ \alpha_{t-1}^2 )I)$. By setting $\sigma_t^2 + \alpha_t^2 = 1$, this noising process has a simple marginal in the form of $q(x_t | x_0) = \mathcal{N}(x_t; \alpha_t x_0; \sigma_t^2 I)$, where $\{\alpha_t, \sigma_t\}$ are pre-defined signal-noise schedules.
Diffusion models parametrise the denoising process as a Markovian process as well, which takes the form of
\begin{equation}
\begin{gathered} \label{eq:ddpm-model}
    p_\theta(x_0) = \int p_\theta(x_{0:T}) \dif x_{0:T} = \int p_\theta(x_T) \prod_{t=1}^T p_\theta(x_{t-1} | x_t) \dif x_{1:T}, \\
    p_\theta (x_{t-1} | x_t) = \mathcal{N}\left(x_{t-1}; \frac{\alpha_t \sigma_{t-1}^2}{\alpha_{t-1} \sigma_t^2} x_t +  \frac{\alpha_{t-1}^2 - \alpha_t^2}{\alpha_{t-1} \sigma_t^2} \hat{x}_\theta(x_t, t) \right)
\end{gathered}
\end{equation}
in which $\hat{x}_\theta(x_t, t)$ is a neural network to predict the clean data $x_0$ given the noisy input $x_t = \alpha_t x_0 + \sigma_t \epsilon$. Thus, diffusion models can be trained via data prediction loss
\begin{align}
    \mathcal{L} (\theta) = \E_{t, x_0 \sim q(x_0), \epsilon \sim \mathcal{N}(0,I)} \lVert x_0 - \hat{x}_\theta (\alpha_t x_0 + \sigma_t \epsilon, t) \rVert^2.
\end{align}
In condition generation, such as text-to-image generation, one can parametrise the diffusion model as
\begin{align}
     p_\theta (x_0; c) = \int p_\theta(x_T) \prod_{t=1}^T p_\theta(x_{t-1} | x_t; c) \dif x_{1:T},
\end{align}
where $c$ denotes the condition (e.g., text), and replace $\hat{x}_t (\alpha_t x_0 + \sigma_t \epsilon, t)$ with $\hat{x}_t (\alpha_t x_0 + \sigma_t \epsilon, t, c)$. 
In this paper, we study diffusion alignment. Given a diffusion model $p_\theta$ and a reward model $r$, our goal is to maximise the reward of the generated samples.
We present two complementary perspectives for achieving this objective and show that they ultimately lead to the same optimisation problem.
In what follows, we occasionally ignore the dependence on $c$ without loss of generality.

\subsection{Diffusion Alignment as Policy Optimisation}

By the Markov property of denoising diffusion models (cf.\ \cref{eq:ddpm-model}), the denoising process can be cast as a finite-horizon Markov Decision Process (MDP) $(\mathcal{S}, \mathcal{A}, P_0, P, R)$ \citep{black2023training,fan2023dpok}, where
\begin{equation}
\begin{gathered}
    s_t = (x_{T-t}, c), \quad a_t = x_{T-t-1}, \quad P_0 (s_0) = (\mathcal{N}(0,I), p(c)), \quad P(s_{t+1} | s_t, a_t) = (\delta_{a_t}, \delta_{c}) \nonumber \\
    R(s_t, a_t) = r(s_{t+1})\ \text{only if}\ t = T-1\ \text{else}\ 0, \quad \pi_\theta (a_t | s_t) = p_\theta (x_{T-t-1} | x_{T-t}, c).
\end{gathered}
\end{equation}
Here, $s_t$ and $a_t$ denote the state and action at time step $t$; $P_0$ and $P$ are the initial state distribution and the deterministic transition kernel; $R$ is a sparse terminal reward defined only at the final denoising step; and $\pi_\theta$ corresponds to the denoising policy parameterised by $\theta$.
Under this formulation, diffusion alignment amounts to optimising the policy $\pi_\theta$ to maximise the expected terminal reward. This leads to the following diffusion alignment objective \citep{black2023training}
\begin{align}
    \mathcal{J}(\theta) = \E_{\tau \sim \pi_\theta} \left[\sum_{t=0}^T G_t \right]
\end{align}
where $\tau \triangleq (x_0, \dots, x_T)$ denotes the sample trajectory and $G_t$ is the return-to-go, which in the diffusion alignment setting reduces to $G_t = r(x_0)$.
Applying the policy gradient theorem, one can show that \citep[Lemma~4.1]{fan2023dpok}:
\begin{align} \label{eq:ddpo-obj}
    \nabla_\theta \mathcal{J} (\theta) 
    &= \! \E_{\tau \sim \mu_\theta} \left[\sum_{t=0}^T G_t \nabla_{\!\theta} \log \pi_\theta (a_t | x_t)\right] \nonumber \\
    &= \sum_{t=1}^T \E_{p_\theta (x_{t-1} | x_t)} [r(x_{t-1}) \nabla_\theta \log p_\theta (x_{t-1} | x_t)] \nonumber \\
    &= \nabla_\theta \E_{p_\theta (x_0)} [r(x_0)]
\end{align}

This expression shows that diffusion alignment can be viewed as reinforcing denoising transitions that lead to high-reward final samples.

\subsection{Diffusion Alignment as Probability Inference} \label{sec:diffusion-alignment-prob-inf}
Alternatively, diffusion alignment can be formulated from a probabilistic inference perspective. Given a pretrained diffusion model $\pref (x_{t-1} \mid x_t)$ and a reward model $r(x)$, one can define a reward-tilted target distribution $\ptilt (x_{t-1} | x_t) \propto \pref (x_{t-1}|x_t) \exp(r(x_{t-1}) / \beta)$ where $\beta > 0$ controls the strength of reward guidance.
A natural approach is to learn a model $p_\theta$ that approximates $\ptilt$ by minimising the KL divergence $\mathbb{KL}(p_\theta \| \ptilt)$. Taking the gradient with respect to $\theta$, we obtain
\begin{align}
    &\nabla_\theta \mathbb{KL}\left(p_\theta (x_{t-1} | x_t) || \ptilt (x_{t-1} | x_t) \right) \nonumber \\
    &= \int \left( \log \frac{p_\theta (x_{t-1}|x_t)}{\pref (x_{t-1} | x_t) }  - \frac{1}{\beta} r(x_{t-1}) \right) \nabla_\theta p_\theta (x_{t-1}|x_t) \dif x_{t-1} \nonumber \\
        &\qquad + \canceltozero{\int \log Z \nabla_\theta p_\theta (x_{t-1}|x_t) \dif x_{t-1}} + \canceltozero{\int  p_\theta (x_{t-1}|x_t) \nabla_\theta \log p_\theta  (x_{t-1}|x_t) \dif x_{t-1}} \nonumber \\
    &= \nabla_\theta \E_{p_\theta (x_{t-1} | x_t)} \left[- \frac{1}{\beta} r(x_{t-1})\right] + \nabla_\theta \mathbb{KL}(p_\theta (x_{t-1} | x_t) || \pref(x_{t-1} | x_t)).
\end{align}
where we have used the fact that the normalisation constant of $\ptilt$ does not depend on $\theta$.
This leads to the following alignment objective:
\begin{align} 
    \argmax_\theta \sum_{t=1}^T \E_{p_\theta (x_{t-1} | x_t)} \left[r(x_{t-1})\right] - \beta \mathbb{KL}(p_\theta (x_{t-1} | x_t) || \pref(x_{t-1} | x_t))
\end{align}
We observe that this objective closely resembles the one in \cref{eq:ddpo-obj}, but includes an additional KL regularisation term that constrains the learned policy to remain close to the pretrained diffusion model. Such KL regularisation is standard in reinforcement learning \citep{schulman2015trust,schulman2017proximal,haarnoja2018soft,levine2018reinforcement} and variational inference \citep{kingma2013auto,blei2017variational}, where it serves to stabilise optimisation, prevent mode collapse, and preserve the prior.

\subsection{REINFORCE, PPO, and GRPO}
REINFORCE \citep{williams1992simple} is a Monte Carlo policy-gradient estimator that expresses the gradient of the expected return as an expectation of the reward weighted by the score function:
\begin{align}
    \nabla_\theta \E_{\pi_\theta} [G_t] = \E_{\pi_\theta} [G_t \nabla_\theta \log \pi_\theta (a_t | s_t)].
\end{align}
Although unbiased, the REINFORCE estimator typically suffers from high variance.
A standard variance-reduction technique is to introduce a control variate \citep{ranganath2014black} in the form of a baseline $b(s_t)$, which is subtracted from the return-to-go $G_t$.
In reinforcement learning, a common choice is the value function $V_{\pi_\theta}(s_t)$ \citep{schulman2015high}, which is action-independent and therefore preserves unbiasedness, since $\E_{\pi_\theta} [b(s_t) \nabla_\theta \log \pi_\theta (a_t | s_t)] = 0$. This yields the following gradient estimator:
\begin{align}
    \!\!\nabla_\theta \mathcal{J} (\theta) 
    \!=\! \E_{\tau \!\sim\! \pi_\theta} \!\!\left[
        \sum_{t=0}^T (G_t \!-\! b(s_t)) \nabla_\theta \log \pi_\theta (a_t | s_t)
    \right]
    \!=\! \E_{\tau \!\sim\! \pi_\theta} \!\!\left[
        \sum_{t=0}^T \hat{A}_t \nabla_\theta \log \pi_\theta (a_t | s_t)
    \right],
\end{align}
where $\hat{A}_t = G_t - b(s_t)$ is an estimate of the advance $Q_{\pi_\theta}(s_t, a_t) - V_{\pi_\theta}(s_t)$, where $Q$ denotes the Q-value. This leads to the REINFORCE surrogate objective:
\begin{align}
    \mathcal{J}_{\mathrm{REINFORCE}} (\theta) = \E_{\tau \sim \pi_{\theta_{\mathrm{sg}}}} \left[ \sum_{t=0}^T \hat{A}_t \log \pi_\theta (a_t | s_t) \right].
\end{align}
However, $\mathcal{J}_{\mathrm{REINFORCE}}$ is inefficient in practice, since it requires sample trajectries for each policy update. This can also lead to excessively large policy updates, resulting in instability and degraded performance.
Proximal Policy Optimisation (PPO) \citep{schulman2017proximal} addresses these issues by explicitly constraining the deviation between the updated policy $\pi_\theta$ and the reference policy $\pi_{\theta_{\mathrm{sg}}}$, by applying importance sampling with the weight $\rho_t \triangleq \frac{\pi_\theta(a_t | s_t)}{\pi_{\theta_{\mathrm{sg}}}(a_t | s_t)}$, and clipping it to remain within a trust region. The PPO clipped surrogate objective is given by
\begin{align}
    \mathcal{J}_{\mathrm{PPO}}(\theta) =
    \E_{\tau \sim \pi_{\theta_{\mathrm{sg}}}} \left[
        \sum_{t=0}^T
        \min \left(
            \rho_t(\theta) \hat{A}_t,
            \mathrm{clip}\left(\rho_t(\theta), 1-\epsilon, 1+\epsilon\right)\hat{A}_t
        \right)
    \right],
\end{align}
where $\epsilon > 0$ controls the size of the trust region.
Intuitively, PPO preserves the variance-reduction benefits of advantage-weighted policy gradients while preventing overly aggressive updates that would invalidate the on-policy approximation underlying REINFORCE.

While PPO stabilises optimisation by constraining policy updates, it relies on an explit value function $V_{\pi_\theta}$ to construct the advantage estimates. In many settings, particularly those with sparse, trajectory-level rewards, learning an accurate reward value function can be difficult and may introduce addtional instability.
Group Relative Policy Optimisation (GRPO) \citep{shao2024deepseekmath} circumvents this issue by eliminating the value function altogether and instead using relative advantages computed within groups of samples.
Concretely, for a given state $s_t$, GRPO samples a group of actions $\{a_t^i\}_{i=1}^K \sim \pi_{\theta_{\mathrm{sg}}} (\cdot | s_t)$ and computes their corresponding returns $\{G_t^i\}_{i=1}^K$. 
The advantages for each action is then defined relative to the group mean:
\begin{align} \label{eq:advantage-grpo}
    \hat{A}_t^i = G_t^i - \frac{1}{K} \sum_{j=1}^K G_t^i.
\end{align}
This group-normalised advantage can be written as $\hat{A}_t^i = \frac{K-1}{K} \left( G_t^i - \frac{1}{K} \sum_{j \neq i} G_t^j \right)$, in which the second term is a leave-one-out (LOO) baseline \citep{salimans2014using,kool2019buy}. This baseline acts as a control variate, reducing variance without requiring a learned value function.\footnote{We intentionally do not normalise the advantage by the group standard deviation, following Dr. GRPO \citep{liu2025understanding}. Mean-only normalisation yields a REINFORCE estimator with a leave-one-out control variate whose gradient is an unbiased estimator of $\nabla_\theta \mathcal{J}(\theta)$. In contrast, the original GRPO formulation additionally normalises by the group standard deviation, which introduces bias into the policy gradient estimator.}
GRPO combines this relative advantage estimator with a PPO-style clipped objective, yielding the objective:
\begin{align}
    \mathcal{J}_{\mathrm{GRPO}}(\theta) =
    \E_{\{\tau_i\} \sim \pi_{\theta_{\mathrm{sg}}}} \left[
        \frac{1}{K} \sum_{i=1}^K
        \sum_{t=0}^T
        \min \left(
            \rho_t(\theta)\hat{A}_t^i,
            \mathrm{clip} \left(\rho_t(\theta), 1-\epsilon, 1+\epsilon\right)\hat{A}_t^i
        \right)
    \right].
\end{align}
From this viewpoint, GRPO inherits the stability guarantees of PPO through proximal updates, while avoiding the need to learn a parametric value function. This makes GRPO particularly appealing in settings where rewards are defined at the trajectory level or where value estimation is challenging, as is the case in diffusion alignment.

In practice, policy optimisation is often further regularised by penalising deviations from a reference policy $\pi_{\mathrm{ref}}$ via a KL divergence term, leading to the following objective:
\begin{align} \label{eq:final-rl-obj}
    \argmax_\theta \mathcal{J}_{\mathrm{REINFORCE}/\mathrm{PPO}/\mathrm{GRPO}} (\theta) - \beta \sum_{t=0}^T \mathbb{KL}(\pi_\theta (a_t | s_t) || \pi_{\mathrm{ref}} (a_t | s_t)).
\end{align}
Such KL regularisation plays a central role in stabilising training by constraining the updated policy to remain close to a fixed reference.
In many practical implementations, particularly in reinforcement learning from human feedback (RLHF) \citep{ouyang2022training} for large language models, this regularisation is applied implicitly rather than as an explicit penalty in the optimisation objective \citep{zhang2025design,shah2025comedy}. Specifically, the KL term is absorbed into the reward before advantage estimation by modifying the return as
\begin{align} \label{eq:kl-in-reward}
    G_t^\prime = G_t - \beta \log \frac{\pi_\theta (a_t | s_t)}{\pi_{\mathrm{ref}}(a_t | s_t)}.
\end{align}
The resulting modified return is then used to compute the advantage $\hat{A}_t$, effectively enforcing KL regularisation at the level of the reward signal. This implicit formulation contrasts with explicitly adding a KL penalty to the final objective, as in \cref{eq:final-rl-obj}, while yielding similar regularising effects in practice.

\section{Proofs and Derivations}

\subsection{Proof of \cref{prop:optimal-proposal}}

\restapropone*
\begin{proof}
    We first prove that: $p_{\theta^*} = \ptilt(x_{t-1} | x_t)$, $\theta^* = \argmin_\theta \mathcal{L}^h_{\mathrm{Var}}(t;\theta)$.
    This can be straightforwardly validated by observing that the minimum variance implies the importance weight is a constant for all $(x_{t-1}, x_t)$. Therefore, we have 
    \begin{align}
        \frac{p_{\mathrm{ref}}(x_{t-1} | x_t) \exp(\frac{r(x_{t-1})}{\beta})}{p_{\theta^*} (x_{t-1} | x_t)} = const. \quad \forall x_{t-1}, x_t.
    \end{align}
    Therefore, $p_{\theta^*} (x_{t-1} | x_t) = \ptilt (x_{t-1} | x_t) \propto p_{\mathrm{ref}}(x_{t-1} | x_t) \exp(\frac{r(x_{t-1})}{\beta})$.
    We then proceed to prove $\left. \nabla_\theta \mathcal{L}^h_{\mathrm{Var}}(t;\theta) \right|_{h=p_\theta} = \nabla_\theta \mathbb{KL}(p_\theta (x_{t-1} | x_t) || \ptilt(x_{t-1} | x_t))$.
    To see this, we define 
    \begin{align}
        w_\theta (x_{t-1}, x_t) = \frac{p_{\mathrm{ref}}(x_{t-1} | x_t) \exp(\frac{r(x_{t-1})}{\beta})}{p_{\theta} (x_{t-1} | x_t)} \quad \Rightarrow \quad \nabla_\theta \log w_\theta = - \nabla_\theta \log p_\theta (x_{t-1} | x_t)
    \end{align}
    Using the definition from \cref{eq:log-var-loss}, we see that 
    \begin{align}
        \nabla_\theta \mathcal{L}^h_{\mathrm{Var}}(t;\theta)
        &= \frac{1}{2} \nabla_\theta \int h \log^2 w_\theta \dif x_{t-1} - \frac{1}{2} \nabla_\theta \left( \int h \log w_\theta  \dif x_{t-1} \right)^2 \nonumber \\
        &= \int h \log w_\theta \nabla_\theta  \log w_\theta  \dif x_{t-1} - \left( \int h \log w_\theta  \dif x_{t-1} \right) \left( \int h \nabla_\theta  \log w_\theta  \dif x_{t-1} \right). \nonumber
    \end{align}
    By setting $h = p_\theta$ and noting that $\nabla_\theta \log w_\theta = - \nabla_\theta \log p_\theta (x_{t-1} | x_t)$, we have
    \begin{align}
        \left. \nabla_\theta \mathcal{L}^h_{\mathrm{Var}}(t;\theta) \right|_{h=p_\theta}
        &= \int - \log w_\theta \nabla_\theta  p_\theta   \dif x_{t-1} - \left( \int p_\theta \log w_\theta  \dif x_{t-1} \right) \canceltozero{\left( \int - p_\theta \nabla_\theta  \log p_\theta  \dif x_{t-1} \right)} \nonumber \\
        &= \nabla_\theta \E_{p_\theta (x_{t-1} | x_t)} \left[ -\frac{1}{\beta} r(x_{t-1}) \right] + \int \log \frac{p_\theta (x_{t-1} | x_t)}{\pref (x_{t-1} | x_t)} \nabla_\theta p_\theta (x_{t-1} | x_t) \dif x_{t-1} \nonumber \\
        &= \nabla_\theta \E_{p_\theta (x_{t-1} | x_t)} \left[ -\frac{1}{\beta} r(x_{t-1}) \right] + \nabla_\theta \mathbb{KL}(p_\theta (x_{t-1} | x_t) || \pref (x_{t-1} | x_t)) \nonumber \\
        &= \nabla_\theta \mathbb{KL}(p_\theta (x_{t-1} | x_t) || \ptilt(x_{t-1} | x_t)), \nonumber
    \end{align}
    which completes the proof.
\end{proof}

\subsection{Proof of \cref{eq:graidnet-monte-carlo-gradient}} \label{sec:proof-unbias-gradient}
In this section, we show that although the Monte Carlo estimator in \cref{eq:log-var-monte-carlo} is biased, its gradient remains an unbiased estimator when the reference distribution is chosen as $h = p_\theta$. We establish this by explicitly analysing the gradient of the estimator:
\begin{align} \label{eq:appendix-grad-log-var}
    \nabla_\theta \hat{\mathcal{L}}^h_{\mathrm{Var}}(t;\theta)
    &= \frac{1}{(K-1)} \sum_{i=1}^K A_t^i (\theta) \left( \nabla_\theta \log R_\theta (x^i_{t-1} | x_t^i) - \nabla_\theta \overline{\log R_\theta (x_{t-1} | x_t)}  \right) \nonumber \\
    &= \frac{1}{(K-1)} \left( \sum_{i=1}^K A_t^i (\theta)  \nabla_\theta \log R_\theta (x^i_{t-1} | x_t^i) - \nabla_\theta \overline{\log R_\theta (x_{t-1} | x_t)} \canceltozero{\sum_{i=1}^K A_t^i}  \right) \nonumber \\
    &= \frac{1}{(K-1)} \sum_{i=1}^K - A_t^i (\theta)  \nabla_\theta \log p_\theta (x_{t-1} | x_t).
\end{align}
To simplify notation, define 
\begin{align}
    f_t^i \triangleq \log R_\theta (x^i_{t-1} | x_t^i) \!+\! \frac{1}{\beta} r(x^i_{t-1}), \quad \Bar{f}_t = \frac{1}{K} \sum_{j=1}^K f_t^j.
\end{align}
Then we can rewrite
\begin{align} \label{eq:grad-log-var-loo}
    \nabla_\theta \hat{\mathcal{L}}^h_{\mathrm{Var}}(t;\theta)
    &= \frac{-1}{K-1} \sum_{i=1}^K \left( f_t^i - \Bar{f}_t \right) \nabla_\theta \log p_\theta (x_{t-1}^i | x_t^i)  \\
    &= - \sum_{i=1}^K \left( \frac{1}{K} f_t^i - \frac{1}{K(K-1)} \sum_{j \neq i} f_t^j \right) \nabla_\theta \log p_\theta (x_{t-1}^i | x_t^i).
\end{align}
Now setting $h = p_\theta$, the second term inside the parentheses vanishes in expectation
\begin{align}
    \E_{p_\theta (x_{t-1} | x_t)} \left [\nabla_\theta \log p_\theta (x_{t-1}^i | x_t^i) \sum_{j \neq i} f_t^j  \right] = \sum_{j \neq i} f_t^j \nabla_\theta  \int p_\theta (x_{t-1}^i | x_t^i) \dif x_{t-1}^i = 0.
\end{align}
since the score function integrates to zero. As a result, taking expectations yields
\begin{align}
    \!\!\!\E_{p_\theta} \left[\left. \nabla_\theta \hat{\mathcal{L}}^h_{\mathrm{Var}}(t;\theta) \right|_{h=p_\theta} \! \right] 
    &= - \frac{1}{K} \sum_{i=1}^K \E_{p_\theta} \left[ f_t^i \nabla_\theta \log p_\theta (x_{t-1}^i | x_t^i) \right] \nonumber \\
    &= - \E_{p_\theta} \left[
       \left( \log \frac{p_{\mathrm{ref}}(x_{t-1} | x_t))}{p_\theta (x_{t-1} | x_t)} + \frac{1}{\beta} r(x_{t-1}) \right) \nabla_\theta \log p_\theta (x_{t-1} | x_t)
    \right] \nonumber \\ 
    &= \nabla_\theta \E_{p_\theta (x_{t-1} | x_t)}\left[ -\frac{1}{\beta} r(x_{t-1}) \right] + \nabla_\theta \mathbb{KL}(p_\theta (x_{t-1} | x_t) || \pref (x_{t-1} | x_t)) \nonumber \\
        &= \nabla_\theta \mathbb{KL}(p_\theta (x_{t-1} | x_t) || \ptilt(x_{t-1} | x_t)). \nonumber
\end{align}
Hence, although $\hat{\mathcal{L}}^h_{\mathrm{Var}}(t;\theta)$ is a biased estimator of the variance objective, its gradient is an unbiased estimator of the true gradient when $h = p_\theta$.

\textbf{Remark.} The term $\Bar{f}_t$ in \cref{eq:grad-log-var-loo} acts as a leave-one-out baseline, reducing variance without introducing bias. Motivated by \cref{eq:appendix-grad-log-var}, we may instead define the following surrogate loss:
\begin{align}
    \hat{\mathcal{L}}_{h=p_\theta}^{\mathrm{Var}}(\theta) 
    = \frac{1}{(K-1)} \sum_{i=1}^K - A_t^i (\theta_{\mathrm{sg}})  \log p_\theta (x_{t-1} | x_t).
\end{align}
Applying importance sampling and PPO-style clipping yields the following objective
\begin{align} \label{eq:appendix-vamp3-obj}
    \argmax_\theta
    \frac{1}{K-1} \sum_{i=1}^K
        \sum_{t=0}^T
        \min \left(
            \rho^i_t(\theta){A}_t^i,
            \mathrm{clip} \left(\rho^i_t(\theta), 1-\epsilon, 1+\epsilon\right){A}_t^i
        \right)
\end{align}
with $\rho^i_t(\theta) = \frac{p_\theta (x^i_{t-1} | x^i_t)}{p_{\theta_{\mathrm{old}}} (x^i_{t-1} | x^i_t)}$, $\{x_t^i\} \sim p_{\theta_{\mathrm{old}}}$, and importantly
\begin{align}
    {A}_t^i = \log R_\theta (x^i_{t-1} | x_t^i) \!+\! \frac{1}{\beta} r(x^i_{t-1}) \!-\! \overline{\log R_\theta (x_{t-1} | x_t)} - \frac{1}{\beta} \overline{r(x_{t-1})}.
\end{align}
Compared to the advantage used in GRPO (see \cref{eq:advantage-grpo}), this advantage contains two additional components, $\log R_\theta (x^i_{t-1} | x_t^i)$ and $\overline{\log R_\theta (x_{t-1} | x_t)}$.
The latter again acts as a LOO baseline, while the former corresponds to a KL-regularisation term subtracted from the reward, as in \cref{eq:kl-in-reward}.

\section{Holistic VMPO Kaleidoscopes} \label{sec:appendix-holistic-vmpo}
In \cref{sec:vmpo}, we derive the VMPO objective by restricting attention to consecutive state pairs $(x_{t-1}, x_t)$. While \cref{prop:optimal-proposal} establishes the validity of this formulation, it characterises only a ``local'' optimisation perspective.
In this section, we reinterpret the VMPO objective through the lens of sequential Monte Carlo \citep{ou2025inference,ou2025discrete,he2025rne}. From this viewpoint, we show that the variance of the log importance weight of the full trajectory is upper-bounded by the sum of per-timestep variances,
\begin{equation} \label{eq:log-var-inequality}
\begin{gathered}
    \mathbb{V}_h (\log w_{0:T}) \leq T \sum_t \mathbb{V}_h (\log w_t) \\
    w_t = \frac{p_{\mathrm{ref}}(x_{t-1} | x_t)\exp\left(\frac{1}{\beta} r(x_{t-1})\right)}{p_\theta (x_{t-1} | x_t)}; \quad
    w_{0:T} = \frac{p_{\mathrm{ref}}(x_{0:T})\exp\left(\frac{1}{\beta} \sum_t r(x_{t})\right)}{p_\theta (x_{0:T})}.
\end{gathered}
\end{equation}
This result implies that minimising $\sum_t \mathbb{V}_h (\log w_t)$ effectively controls the variance of the importance weights associated with the joint trajectory distribution. In turn, this provides a principled justification for the VMPO objective from a ``global'', trajectory-level perspective.
We proceed by first reviewing the SMC framework and establishing the above inequality. Building on this interpretation, we then propose several VMPO variants motivated by the SMC perspective.

\subsection{Demystifying VMPO} \label{sec:appendix-demystify-vmpo}

In Sequential Monte Carlo \citep{del2006sequential}, we construct a sequence of target distributions $\ptilt (x_{t:T})$ by tilting the base distribution $\pref (x_{t:T})$ with a collection of potential functions $U(x_{t:T})$. Starting from the terminal distribution $\ptilt (x_{T}) \propto \pref (x_T) U(x_T)$, the intermediate targets are defined recursively as
\begin{align}
    \ptilt (x_{t:T}) \propto \pref (x_{t:T}) \prod_{s=t}^T U(x_{s:T}) = \pref (x_t | x_{t+1}) U(x_{t:T}) \ptilt (x_{t+1:T}).
\end{align}
The second equality follows from the Markov property of the pretrained diffusion model $\pref$, which implies $\pref (x_t | x_{t+1:T}) = \pref (x_t | x_{t+1})$. The potentials are required to satisfy the constraint
\begin{align} \label{eq:potential-marginal-constraint}
    \prod_{t=0}^T U(x_{t:T}) = \exp \left( \frac{1}{\beta} r(x_0)\right),
\end{align}
which ensures that the marginal distribution over the final sample obeys $\ptilt (x_{0}) \propto \pref (x_{0}) \exp (r(x_0) / \beta))$. Notably, this constraint can be satisfied by multiple choices of potential functions. We refer readers to \cite{singhal2025general} for a detailed discussion of different constructions.

We now consider estimating the expectation of a test function $\delta$ under the marginal $\ptilt (x_t)$, i.e., $\E_{\ptilt (x_t)}[\delta (x_t)]$. 
When $\delta (\cdot)$ is chosen as the Dirac delta function, this estimation reduces to constructing an empirical approximation of the marginal distribution $\ptilt (x_t)$ itself.
To estimate $\E_{\ptilt (x_t)}[\delta (x_t)]$, we employ importance sampling with proposal distribution $p_\theta$. This yields
\begin{align}
    \E_{\ptilt (x_t)}[\delta (x_t)] \!=\! \E_{p_\theta (x_{t:T})} \!\!\left[\! \frac{\ptilt (x_{t:T})}{p_\theta (x_{t:T})} \delta(x_t) \right] \!\!\approx\!\! \sum_{i=1}^K w_{t:T}^i \delta(x_t), \text{where}\ w_{t:T}^i \!=\! \frac{\ptilt (x_{t:T}^i)}{p_\theta (x_{t:T}^i)}, x_{t:T}^i \!\sim\! p_\theta. \nonumber
\end{align}
To learn an optimal proposal distribution, we minimise the log-variance of the importance weights, following the approach in Section~\ref{sec:vmpo}. Using the identity $\mathbb{V}_h [x] = \E_h [(x - \E_h[x])^2]$ and defining $\overline{\log w_{t:T}} = \E_h [\log w_{t:T}]$, we obtain
\begin{align} \label{eq:log-var-upper-bound}
    \mathbb{V}_h(\log w_{0:T})
    &= \E_{h} \left[ \left| \sum_{t} \log  \frac{\pref (x_{t} | x_{t+1}) U (x_{t:T})}{p_\theta  (x_{t-1} | x_t)} - \overline{\log w_{t:T}} \right|^2 \right] \nonumber \\
    &= T^2 \E_{h} \left[ \left| \sum_{t} \frac{1}{T} \log  \frac{\pref (x_{t} | x_{t+1}) U (x_{t:T})}{p_\theta  (x_{t-1} | x_t)} - \frac{1}{T} \overline{\log w_{t:T}} \right|^2 \right] \nonumber \\
    &\leq T^2 \E_{h} \left[  \sum_{t} \frac{1}{T} \left| \log \frac{\pref (x_{t} | x_{t+1}) U (x_{t:T})}{p_\theta  (x_{t-1} | x_t)} - \overline{\log w_{t:T}} \right|^2  \right] \nonumber \nonumber \\
    &= T^2 \E_{h, t} \left[  \left| \log \frac{\pref (x_{t} | x_{t+1}) U (x_{t:T})}{p_\theta  (x_{t-1} | x_t)} - \overline{\log w_{t:T}} \right|^2  \right] \nonumber \\
    &= T \sum_t \mathbb{V}_h (\log w_{t:T}), 
\end{align}
which establishes the bound $\mathbb{V}_{\!h} \!(\log w_{0:T}) \!\leq\! T \!\sum_t \!\! \mathbb{V}_{\!h} \! (\log w_{t:T})$. By choosing $U\!(x_{t:T}) \!=\! \exp(r(x_t)\!/\!\beta)$, we recover the result in \cref{eq:log-var-inequality}.

This inequality shows that minimising the variance of local importance weights effectively controls the variance of the global importance weight associated with the full trajectory. This property is particularly attractive, as it avoids backpropagating gradients through the entire trajectory; instead, optimisation can be performed using local, per-timestep objectives, leading to more stable training and improved scalability to long horizons.

\subsection{VMPO Kaleidoscopes} \label{sec:appendix-vmpo-kaleidoscopes}
In this section, we propose different design choices for the potential functions $U(x_{t:T})$ used in VMPO.
While all valid potentials must satisfy the global constraint that their product recovers the desired exponential tilting of the terminal reward, this constraint alone does not uniquely specify $U$.
We present several representative choices that offer complementary perspectives on how reward information can be propagated across timesteps.

\textbf{Option 1: Return-to-Go.}
A natural choice of potential is the return-to-go, $U(x_{t:T}) = \exp \left( \frac{1}{T\beta} r(x_0) \right)$, which satisfies $\prod_t U(x_{t:T}) = \exp \left( \frac{1}{\beta} r(x_0) \right)$.
Under this choice, the importance weight for a full trajectory simplifies to
\begin{align}
    w_{0:T} = \frac{\pref (x_{0:T}) \exp \left( \frac{1}{\beta} x_0 \right)}{p_\theta (x_{0:T})} = \frac{\prod_t \pref (x_{t-1} | x_t) \exp \left( \frac{1}{\beta} x_0 \right)}{\prod_t p_\theta (x_{t-1} | x_t)}.
\end{align}
Applying the variance bound derived in \cref{eq:log-var-upper-bound}, we obtain
\begin{align} \label{eq:appendix-log-var-upperbound-r2g}
    \mathbb{V}_h \left( \log \frac{\pref (x_{0:T}) \exp \left( \frac{1}{\beta} x_0 \right)}{p_\theta (x_{0:T})} \right)
    \leq T \sum_t \mathbb{V}_h \left( \log \frac{\pref (x_{t-1} | x_t) \exp \left( \frac{1}{T \beta} r(x_0) \right)}{p_\theta (x_{t-1} | x_t)} \right),
\end{align}
where minimising the right-hand side is equivalent to minimising the objective $\sum_t \mathcal{L}_{h}^{\mathrm{Var}} (t;\theta)$ defined in \cref{eq:log-var-loss}.\footnote{In \cref{eq:log-var-loss}, the factor $\frac{1}{T}$ is omitted from the reward term, as it can be absorbed into the temperature parameter $\beta$.}

\textbf{Option 2: Difference.}
Another widely used choice of potential in SMC is the difference-based construction: $U (x_{t:T}) = \exp \left( \frac{1}{\beta} (r(x_t) - r(x_{t+1})) \right)$ and $U (x_T) = 1$, which satisfies the telescoping constraint $\prod_t U(x_{t:T}) = \exp \left( \frac{1}{\beta} r(x_0) \right)$.
Under this choice, the importance weight of a full trajectory takes the form
\begin{align}
    w_{0:T} = \prod_t \frac{\exp (r(x_{t-1}) / \beta)}{\exp (r(x_{t}) / \beta)} \frac{\pref (x_{t-1} | x_t)}{p_\theta (x_{t-1} | x_t)}.
\end{align}
Applying the variance bound in \cref{eq:log-var-upper-bound}, we obtain
\begin{align}
    \!\!\mathbb{V}_{\!h}\! \left( \! \sum_t \log \frac{\exp (r(x_{t-1}) / \beta)}{\exp (r(x_{t}) / \beta)} \frac{\pref (x_{t-1} | x_t)}{p_\theta (x_{t-1} | x_t)}\! \right)
    \!\leq\!
    T\! \sum_t \! \mathbb{V}_{\!h} \! \left(\! \log \frac{\exp (r(x_{t-1}) / \beta)}{\exp (r(x_{t}) / \beta)} \frac{\pref (x_{t-1} | x_t)}{p_\theta (x_{t-1} | x_t)}\! \right)\!.\!\!
\end{align}
Minimising the right-hand side yields an alternative diffusion alignment objective,
\begin{align} \label{eq:vmpo-difference-potential-obj}
    \argmin_\theta \sum_t \mathbb{V}_{h} \left( \log \frac{\exp (r(x_{t-1}) / \beta)}{\exp (r(x_{t}) / \beta)} \frac{\pref (x_{t-1} | x_t)}{p_\theta (x_{t-1} | x_t)} \right).
\end{align}
which leads to the following practical loss:
\begin{align}
    \argmin_{\theta, \phi} \E_{t, h} \left| \log \frac{\exp (r(x_{t-1}) / \beta)}{\exp (r(x_{t}) / \beta)} \frac{\pref (x_{t-1} | x_t)}{p_\theta (x_{t-1} | x_t)} - M_\phi (t) \right|^2.
\end{align}
Notably, this objective is equivalent to the diffusion alignment formulation proposed in \cite{ou2025inference}, which focuses on discrete diffusion models. In practice, rewards are often only defined on clean data $x_0$. 
In such cases, following \cite{wu2023practical}, intermediate rewards can be constructed as
\begin{align}
    r(x_t) = \E_{p_\theta (x_0 | x_t)} [r(x_0)] \approx r(\hat{x}_\theta (x_t, t)).
\end{align}
More generally, intermediate rewards can be parameterised using a neural network $F_\phi: \mathcal{X} \rightarrow \mathbb{R}$:
\begin{align}
    r_\phi (x_t) = F_\phi (x_t) r(\hat{x}_\theta (x_t, t)), \quad F_\phi (x_0) = 1
\end{align}
where the constraint $F_\phi (x_0) = 1$ ensures satisfaction of the marginal constraint in \cref{eq:potential-marginal-constraint}.
This parameterisation is closely related to the forward-looking reward shaping used in GFlowNets \citep{pan2023better}.

\textbf{Remark.} 
The return-to-go and difference potentials provide two valid decompositions of the same globally tilted target distribution. 
The return-to-go formulation assigns the terminal reward uniformly across timesteps, resulting in a simple objective that does not require intermediate rewards, but offers limited temporal credit assignment.
In contrast, the difference potential yields a telescoping decomposition that attributes reward changes to individual transitions, enabling more localised credit assignment. This can provide a more targeted optimisation signal when reliable intermediate rewards are available, but relies on their accurate estimation in diffusion models.

\subsection{Connection to Existing Work} \label{sec:appendix-conect-related-work}
We then connect the proposed VMPO framework to existing works, demonstrating that a wide range of prior methods can be unified under VMPO through different choices of variance minimisation.

\textbf{GVPO \citep{zhang2025gvpo}.}
We first consider Group Variance Policy Optimisation (GVPO) \citep{zhang2025gvpo}, which is designed to maximise the reward of a large language model (LLM) $p_\theta (x | c)$, where $x$ denotes the generated response and $c$ the prompt.
GVPO optimises the following objective \citep[Equantion 8]{zhang2025gvpo}
\begin{align} \label{eq:gvpo}
    \argmax_\theta \sum_{i=1}^K \frac{p_\theta (x^i | c)}{p_{\theta_{\mathrm{old}}} (x^i | c)} A^i, 
    A^i = \frac{1}{\beta} r(x^i | c) + \log \frac{p_{\theta_{\mathrm{sg}}} (x^i | c)}{p_{\theta_{\mathrm{old}}} (x^i | c)} - \frac{1}{\beta} \overline{r(x | c)} - \overline{\log \frac{p_{\theta_{\mathrm{sg}}} (x | c)}{p_{\theta_{\mathrm{old}}} (x | c)}},
\end{align}
where $\overline{r(x | c)} = \frac{1}{K} \sum_{i=1}^K r(x^i | c)$ and $\overline{\log \frac{p_{\theta_{\mathrm{sg}}} (x | c)}{p_{\theta_{\mathrm{old}}} (x | c)}} = \frac{1}{K} \sum_{i=1}^K \log \frac{p_{\theta_{\mathrm{sg}}} (x^i | c)}{p_{\theta_{\mathrm{old}}} (x^i | c)}$. 
It can be seen that the objective in \cref{eq:gvpo} closely resembles the VMPO formulation in \cref{eq:appendix-vamp3-obj}, differing primarily in the choice of model class and the tractability of likelihood evaluations. While GVPO has demonstrated strong empirical performance for LLM alignment, it does not directly extend to diffusion models, as the likelihood $p_\theta (x|c)$ is intractable in diffusion models but tractable for autoregressive LLMs. To address this limitation, we derive an upper bound on the log-variance of the importance weights, which yields a tractable VMPO objective suitable for diffusion alignment, as formalised in \cref{eq:appendix-log-var-upperbound-r2g}.

\textbf{FlowRL \citep{zhu2025flowrl}.}
FlowRL is an alternative approach for reward optimisation in LLMs.
Specifically, it train the model $p_\theta (x | c)$ using the following objective \citep[Equation 6]{zhu2025flowrl}
\begin{align} \label{eq:flowrl-obj}
    \argmin_{\theta, \phi} w \cdot \left( \log Z_\phi (x) + \frac{1}{|x|} \log p_\theta (x|c) - \frac{1}{|x|} \log \pref (x|c) - r(x|c) \right)^2,
\end{align}
where $w = \mathrm{clip}\left(\frac{p_{\theta_{\mathrm{sg}}} (x|c)}{p_{\theta_{\mathrm{old}}} (x|c)}, 1-\epsilon, 1+\epsilon\right)$ denotes the clipped importance weight, and $Z_\phi$ the learnable partition function.
This objective can be interpreted as a VMPO variant in which a neural estimator is used to approximate the mean of the log importance weight, as in \cref{eq:vmpo-amor-z-obj}. As with GVPO, FlowRL relies on explicit likelihood evaluations and is therefore not directly applicable to diffusion models. VMPO circumvents this limitation by operating on variance bounds of importance weights at each time step, enabling diffusion alignment without requiring tractable likelihoods of the clean data.

\textbf{GFlowNet \citep{zhang2024improving}.}
Under the difference potential, VMPO induces the per-timestep importance weight as in \cref{eq:vmpo-difference-potential-obj}
\begin{align}
    w_t = \frac{\exp (r(x_{t-1}) / \beta)}{\exp (r(x_{t}) / \beta)} \frac{\pref (x_{t-1} | x_t)}{p_\theta (x_{t-1} | x_t)}.
\end{align}
Beyond directly minimising the variance of importance weights, 
an alternative way to control the variance of these weights is to directly enforce consistency, via mean square errors, between the numerator and denominator at each timestep, yielding the objective
\begin{align} \label{eq:vmpo-gflownet-obj}
    \argmin_\theta \sum_t \left( \log \frac{\exp (r(x_{t-1}) / \beta)}{\exp (r(x_{t}) / \beta)} \frac{\pref (x_{t-1} | x_t)}{p_\theta (x_{t-1} | x_t)} \right)^2,
\end{align}
whose minimum of is attained when $w_t = 1$ and thus $\mathbb{V}(\log w_t) = 0$.
The objective in \cref{eq:vmpo-gflownet-obj} closely resembles the detailed-balance objectives used in GFlowNets \citep{zhang2024improving}, with the main difference arising from how intermediate rewards are parameterised. In \cite{zhang2024improving}, the intermediate reward is modelled via a forward-looking estimator $\Tilde{r}_\phi(x_t) = F_\phi (x_t) r(\hat{x}_\theta (x_t, t))$, subject to the terminal constraint $\Tilde{r}_\phi(x_0) = r(x_0)$. This yields the following loss:
\begin{align} \label{eq:vmpo-gflownet-obj2}
    \argmin_{\theta, \phi} \sum_t \left( \log \frac{\exp (\Tilde{r}_\phi (x_{t-1}) / \beta)}{\exp (\Tilde{r}_\phi (x_{t}) / \beta)} \frac{\pref (x_{t-1} | x_t)}{p_\theta (x_{t-1} | x_t)} \right)^2.
\end{align}
Due to the terminal constraint, the optimal policy still satisfies $p_{\theta^*} \propto \pref (x_{t-1} | x_t) \exp (r(x_{t}) / \beta)$, consistent with the analysis of the difference potential in \cref{sec:appendix-vmpo-kaleidoscopes}.
Notably, \cite{zhang2024improving} replace $\pref$ with the forward noising kernel $q(x_t | x_{t+1})$ in \cref{eq:vmpo-gflownet-obj2}, which leads to the optimal policy $p_{\theta^*} \propto \exp (r(x_{t}) / \beta)$. We refer to \cite{liu2024efficient} for further details.

\textbf{$\nabla$-GFlowNet \citep{liu2024efficient}.}
Beyond the above methods, an alternative route to zero variance is to enforce the log importance weight to be constant. A sufficient condition for this is the vanishing of its gradient, since
\begin{align}
    \lVert \nabla_x \log w_t  \rVert_2^2 = 0 \ \Rightarrow \ \log w_t = const. \ \Rightarrow \ \mathbb{V}(\log w_t) = 0.
\end{align}
This observation motivates learning the policy by penalising the gradient norm of the log importance weight.
Under the difference potential, this leads to the following objective:
\begin{align} \label{eq:vmpo-nabla-obj1}
    \argmin_\theta \sum_t \left\lVert \frac{1}{\beta} \nabla_{x_{t-1}} r(x_{t-1}) +  \nabla_{x_{t-1}} \log \pref (x_{t-1} | x_t) - \nabla_{x_{t-1}} \log p_\theta (x_{t-1} \mid x_t) \right\rVert_2^2.
\end{align}
This objective can be interpreted as minimising the Fisher divergence between the learned policy $p_\theta (x_{t-1} | x_t)$ and the locally tilted target $\ptilt (x_{t-1} | x_t) \propto \pref (x_{t-1} | x_t) \exp(r(x_{t-1} / \beta))$.
An equivalent formulation can be obtained by differentiating with respect to $x_t$, yielding
\begin{align}  \label{eq:vmpo-nabla-obj2}
    \argmin_\theta \sum_t \left\lVert \nabla_{x_{t}} \log \pref (x_{t-1} | x_t) - \nabla_{x_{t}} \log p_\theta (x_{t-1} \mid x_t) - \frac{1}{\beta} \nabla_{x_{t}} r(x_{t})   \right\rVert_2^2.
\end{align}
In practice, the policy $p_\theta$ may be trained using either of these objectives, or a combination of both.
These gradient-based objectives closely resemble those proposed in $\nabla$-GFlowNet \citep{liu2024efficient}, differing mainly in the parametrisation of intermediate rewards. In \cite{liu2024efficient}, the intermediate reward is modelled via a forward-looking parametrisation with a terminal constraint, as in \cref{eq:vmpo-gflownet-obj2}.
Consequently, the reward gradient takes the form
\begin{align}
    \nabla_{x_{t}} \Tilde{r}_\phi (x_{t}) = \nabla_{x_t} r(\hat{x}_\theta (x_t, t)) + \beta F_\phi (x_t), \qquad F_\phi (x_0) = 0.
\end{align}
Substituting this parametrisation into the gradient-matching objectives yields
\begin{align}
    &\!\!\argmin_{\theta, \phi} 
    \!\!\sum_t \!\left\lVert \frac{1}{\beta} \nabla_{\!x_{t\!-\!1}}\! r(\hat{x}_\theta (x_{t\!-\!1}, t\!-\!1)) \!+\! \nabla_{\!x_{t\!-\!1}}\! F_\phi (x_{t\!-\!1}) \!+\!  \nabla_{\!x_{t\!-\!1}}\!\! \log \pref (x_{t\!-\!1} | x_t) \!-\! \nabla_{\!x_{t\!-\!1}}\!\! \log p_\theta (x_{t-1} \mid x_t) \right\rVert_2^2 \nonumber \\
    &\qquad + \left\lVert  \nabla_{x_{t}}\! \log \pref (x_{t\!-\!1} | x_t) \!-\! \nabla_{x_{t}}\! \log p_\theta (x_{t-1} \mid x_t) - \frac{1}{\beta} \nabla_{x_{t}}\! r(\hat{x}_\theta (x_t, t)) \!+\! \nabla_{x_{t}}\! F_\phi (x_{t}) \right\rVert_2^2,
\end{align}
which recovers the $\nabla$-GFlowNet objective defined in \citep[Equation 19]{liu2024efficient}.

\section{Additional Experimental Details and Results} \label{sec:appendix-additional-exp}

\subsection{Experimental Details}

\textbf{Settings for SD1.5.}
We finetune SD1.5 \citep{rombach2022high} using HPSv2 \citep{wu2023human} and ImageReward \citep{xu2023imagereward}.
For HPSv2, we adopt the photo and painting prompts from \cite{wu2023human} as training data, while for ImageReward we use the DrawBench prompt set \citep{saharia2022photorealistic}. For evaluation, we follow \cite{domingo2024adjoint} and use their collected prompt set as test prompts. Both training setups largely share the same hyperparameter configuration, following the DDPO \citep{black2023training} implementation; full details are provided here for completeness.

We use LoRA \citep{hu2022lora} with $\alpha=4$ and $r=4$.
Trainings are performed on 2 NVIDIA A100 80GB GPUs with a per-GPU batch size of $8$. With 4-step gradient accumulation, this yields an effective batch size of $64$.
We train for $150$ epochs, where each epoch consists of sampling $128$ trajectories from $p_{\theta_{\mathrm{old}}}$ using a 50-step DDIM schedule \citep{song2020denoising} during the rollout phase, and performing 2 optimisation steps.
The learning rate is fixed at $3\times 10^{-4}$ for both the diffusion model $p_\theta$ and the mean estimator $M_\phi$.
We employ the AdamW optimiser \citep{loshchilov2017decoupled} with gradient clipping at a norm of $1$.

During training, we adopt classifier-free guidance \citep{ho2022classifier} with a guidance scale of 5. Reward rescaling is employed to improve training efficiency. Specifically, we set $\beta=0.01$ for HPSv2 and $\beta=0.1$ for ImageReward to compute the loss in \cref{eq:vmpo-amor-z-obj}.
We also incorporate a KL regularisation $\mathbb{KL}(p_\theta (x_{t-1} | x_t) || p_{\theta_{\mathrm{old}}}(x_{t-1} | x_t))$ with a coefficient of $1$ to enhance training stability, consistent with \cite{fan2023dpok}.
For the GRPO baseline, we adopt the implementation\footnote{\url{https://github.com/yifan123/flow_grpo}} from \cite{liu2025flow} with a group size of $8$, yielding the same effective number of samples as in our setup. All other hyperparameters are kept identical to those used for VMPO, as described above, to ensure a fair comparison.

\begin{table}[!t] 
    \centering
    \caption{Results on Stable Diffusion v1.5 fine-tuned with ImageReward.}
    \label{tab:finetuning-results}
    \vspace{-2mm}
    \begin{tabular}{lcccc}
        \toprule 
        \textbf{Method} & \textbf{ImageReward} ($\uparrow$) & \textbf{CLIPScore} ($\uparrow$) & \textbf{HPSv2} ($\uparrow$) & \textbf{DreamSim} ($\uparrow$) \\
        \midrule 
        SD1.5 (Base) &
        0.0331 ± 0.0779 & 0.2717 ± 0.0032	& 0.2368 ± 0.0029	& 0.4389 ± 0.0116 \\
        GRPO &  
        0.2490 ± 0.0792 & 0.2720 ± 0.0030	   & 0.2420 ± 0.0032	 & 0.3755 ± 0.0117 \\
        VMPO-R2G &  
        0.1698 ± 0.0758 & 0.2721 ± 0.0032	   & 0.2423 ± 0.0030	 & 0.4008 ± 0.0117 \\
        VMPO-Diff &  
        0.4716 ± 0.0740 & 0.2624 ± 0.0030	   & 0.2645 ± 0.0031	 & 0.3005 ± 0.0117 \\
        \bottomrule 
    \end{tabular}
\end{table}

\textbf{Settings for SD3.5-M.}
We consider the text rendering task following \cite{liu2025flow}, which fine-tunes SD3.5-M to improve the visual text rendering capability of diffusion models. The model is trained using the training prompt set provided in \cite{liu2025flow}, and OCR accuracy is evaluated on their corresponding test prompt set. To assess potential reward hacking beyond task-specific accuracy, we additionally evaluate five complementary metrics on the DrawBench prompt set \citep{saharia2022photorealistic}, following \cite{liu2025flow}: Aesthetic Score\footnote{\url{https://laion.ai/blog/laion-aesthetics/}}, DeQA \citep{you2025teaching}, ImageReward \citep{xu2023imagereward}, PickScore \citep{kirstain2023pick}, and UnifiedReward \citep{wang2025unified}.

Following \cite{liu2025flow}, we use LoRA with $\alpha=64$ and $r=32$. The model is trained on 4 NVIDIA A100 80 GPUs with a batch size of 4. We apply 4-step gradient accumulations, yielding an effective batch size of 64. For each training epoch, we sample $128$ trajectories from $p_{\theta_{\mathrm{old}}}$ using $10$ steps and perform 2 optimisation steps. While at test time, the number of sampling steps is set to $40$, following the setting in \cite{liu2025flow}.
We employ the AdamW optimiser \citep{loshchilov2017decoupled} with the learning rate fixed as $3\times 10^{-4}$ for both $p_\theta$ and $M_\phi$. Moreover, we use bf16 for $p_\theta$ while float32 for $M_\phi$, which is important to stabilise training. During training, we adopt classifier-free guidance with a guidance scale of 4.5 and incorporate a KL regularisation $\mathbb{KL}(p_\theta (x_{t-1} | x_t) || p_{\theta_{\mathrm{old}}}(x_{t-1} | x_t))$ with a coefficient of $1$ to enhance training stability. 
We do not train GRPO ourselves; instead, we evaluate the released checkpoint\footnote{\url{https://huggingface.co/jieliu/SD3.5M-FlowGRPO-Text}} provided by \cite{liu2025flow}. Notably, this checkpoint was trained with a larger batch size and on more GPUs than our setup.

\subsection{Additional Experimental Results}

\begin{wrapfigure}{r}{0.45\linewidth}
    \centering
    \vspace{-2mm}
    \includegraphics[width=.98\linewidth]{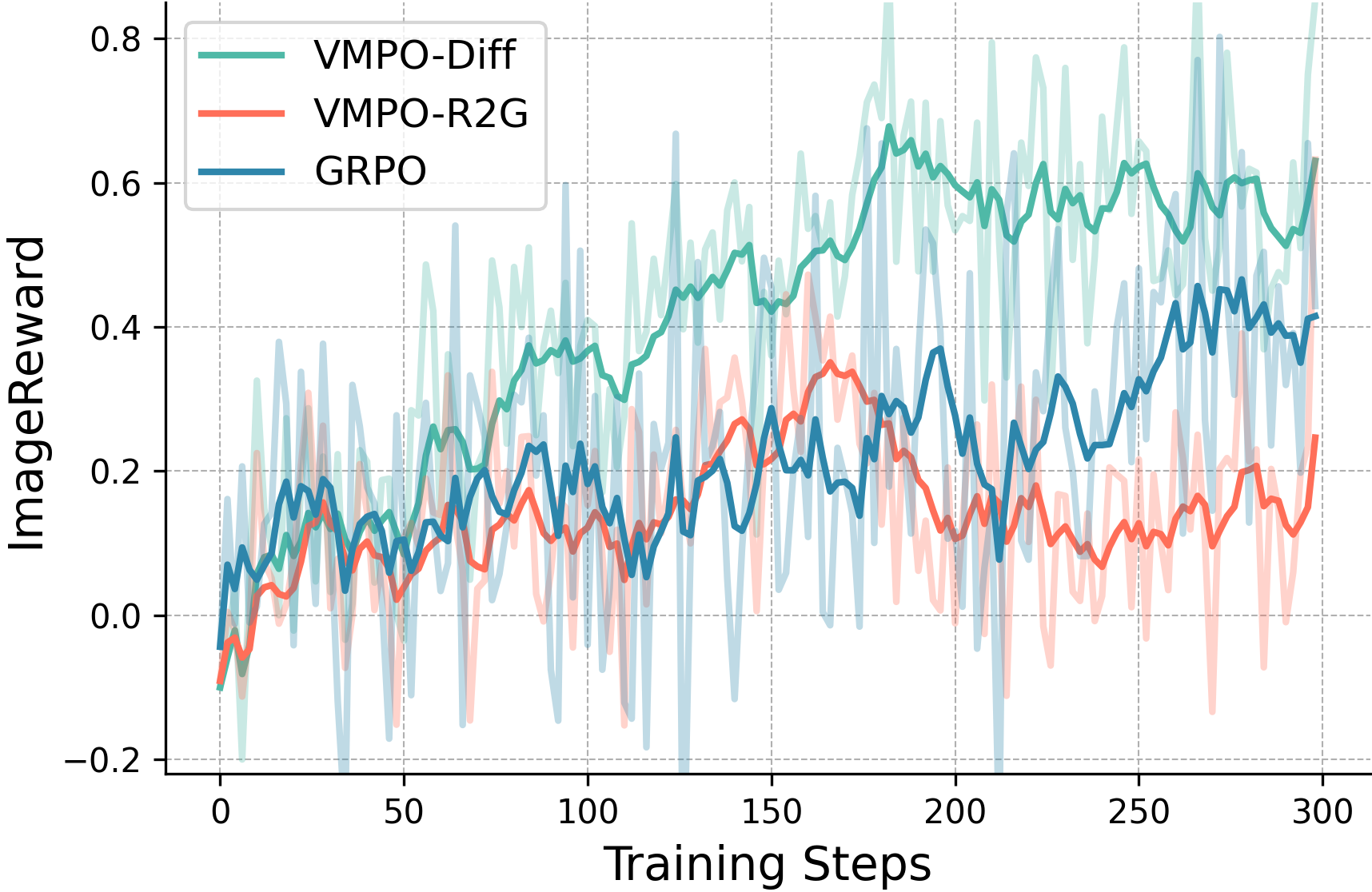}
    \vspace{-2mm}
    \caption{ImageReward convergence curves of SD1.5.}
    \label{fig:imagereward_imagereward_curve}
    \vspace{-3mm}
\end{wrapfigure}
\textbf{Aligning SD1.5 with ImageReward.}
We further evaluate the performance of VMPO by fine-tuning SD1.5 using ImageReward. 
As illustrated by the convergence curves in \cref{fig:imagereward_imagereward_curve}, VMPO-Diff demonstrates a significantly higher reward ceiling and faster optimisation compared to both GRPO and VMPO-R2G. This is quantitatively confirmed in Table 2, where VMPO-Diff achieves a superior ImageReward score $0.4716 \pm 0.0740$, more than doubling the performance of the GRPO baseline.
While all methods show substantial gains over the base SD1.5 model in the target reward, we observe a similar trade-off as seen in the HPSv2 experiments: the increase in ImageReward is accompanied by a decrease in DreamSim scores. 
This suggests that while the model is successfully aligning with human preferences represented by ImageReward, it does so at the cost of visual diversity, a hallmark of reward hacking.

\begin{table*}[!t]
    \centering
    \renewcommand{\arraystretch}{.8}
    \caption{Visual text rendering results on Stable Diffusion v3.5-M. trained with OCR Accuracy.}
    \vspace{-2mm}
    \resizebox{\linewidth}{!}{
        \begin{tabular}{lcccccc}
            \toprule
            \multirow{2}{*}{\textbf{Model}} & \multicolumn{1}{c}{\textbf{Task Metric}} & \multicolumn{2}{c}{\textbf{Image Quality}} & \multicolumn{3}{c}{\textbf{Preference Score}}    \\ \cmidrule(lr){2-2} \cmidrule(lr){3-4} \cmidrule(l){5-7} 
                                   & \textbf{OCR Acc.} & \textbf{Aesthetic}          & \textbf{DeQA}         & \textbf{ImgRwd} & \textbf{PickScore} & \textbf{UniRwd} \\ \midrule
            SD3.5-M & 0.59 & 5.38 & 4.08 & 0.84 & 22.42 & 3.08 \\
            Flow-GRPO  & 0.92 & 5.32 & 4.10 & 0.95 & 22.50 & 3.12 \\
            \rowcolor{gray!20} VMPO-Diff  & 0.91 & 5.25 & 4.07 & 0.94 & 22.43 & 3.08 \\
            \bottomrule
            \end{tabular}
    }
    \vspace{-2mm}
    \label{tab:ocr_result}
\end{table*}

\textbf{Improving Visual Text Rendering of SD3.5-M.}
Given the superior performance of VMPO-Diff in aligning SD1.5, we further evaluate its scalability by optimising the visual text rendering capabilities of the larger SD3.5-M model using OCR accuracy as the reward signal.
As shown in \cref{fig:ocr_curve}, VMPO consistently improves OCR accuracy throughout the training process, ultimately achieving a score of 0.91 (see \cref{tab:ocr_result}). 
This represents a significant leap over the base SD3.5-M model's score of 0.59, 
\begin{wrapfigure}{r}{0.45\linewidth}
    \centering
    \vspace{-2mm}
    \includegraphics[width=.98\linewidth]{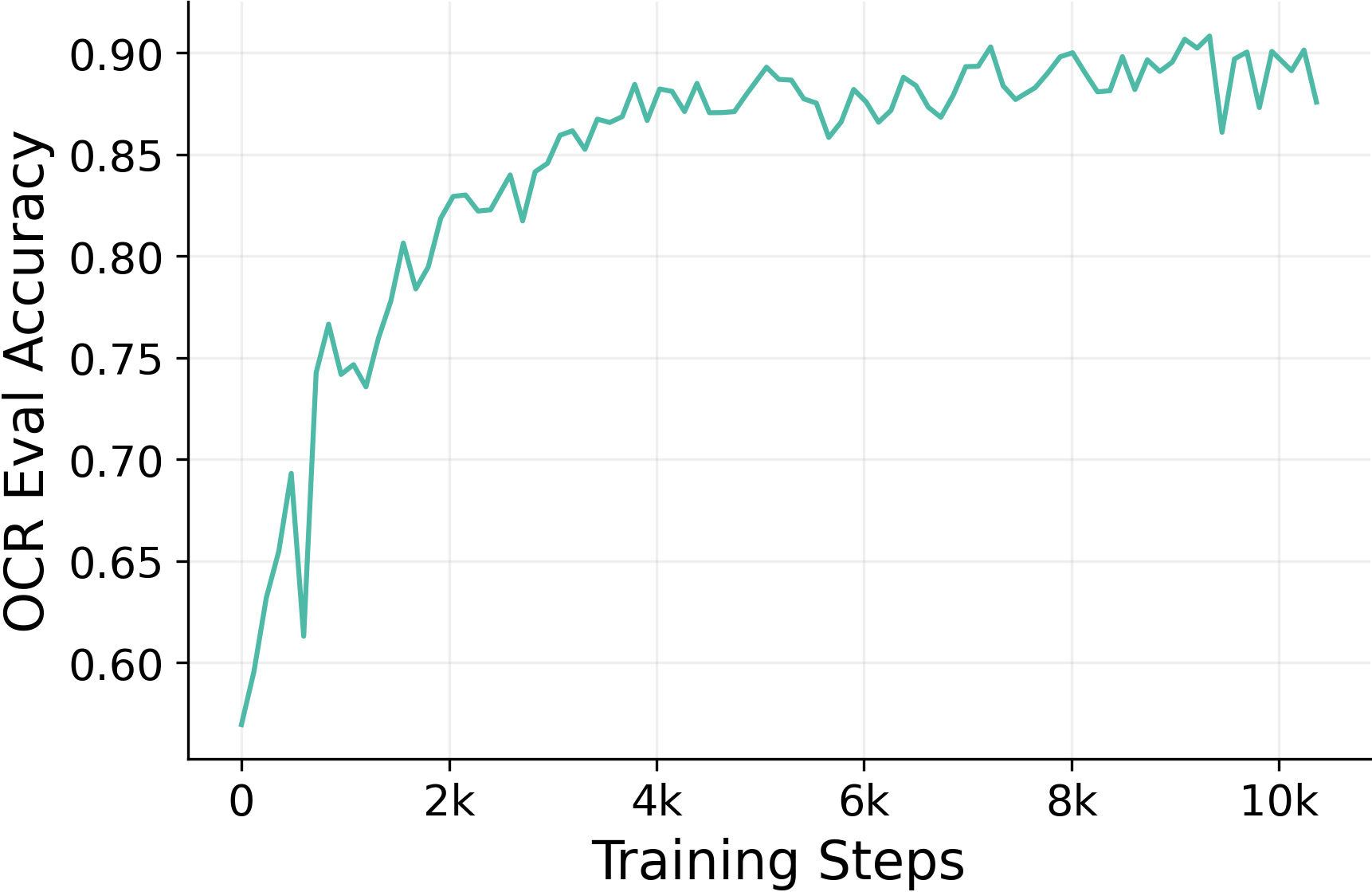}
    \vspace{-2mm}
    \caption{OCR accuracy convergence curves of SD3.5-M.}
    \label{fig:ocr_curve}
    \vspace{-3mm}
\end{wrapfigure}
demonstrating the effectiveness of our alignment method even in large-scale models.
Notably, VMPO achieves performance comparable to Flow-GRPO (0.92), despite Flow-GRPO requiring significantly larger batch sizes and more extensive GPU resources for training.
Qualitative results in \cref{fig:ocr-samples-comp} further validate these findings: while the base model often struggles with legibility and character consistency, VMPO generates images with sharp, accurate, and well-aligned text.
Despite a marginal decrease in image quality compared to the base model, the drastic improvement in task-specific accuracy highlights VMPO as a highly effective alternative for specialised alignment tasks.
A large body of work aims to reduce the number of sampling steps required by diffusion models, enabling fast or even single-step generation.

\section{Related Work}

\textbf{Diffusion and Flow Models.}
Diffusion \citep{sohl2015deep,ho2020denoising,song2020score} and flow models \citep{liu2022flow,lipman2022flow,albergo2023stochastic} have achieved remarkable success in modelling complex data across various domains, including image synthesis \citep{rombach2022high}, 3D generation \citep{poole2022dreamfusion}, video synthesis \citep{ho2022video}, and audio generation \citep{liu2023audioldm}.
Beyond improving sample quality, recent research has increasingly focused on generation efficiency.
A large body of work aims to reduce the number of sampling steps required by diffusion models, enabling fast or even single-step generation.
Representative approaches include accelerated solvers \citep{song2020denoising,lu2022dpm}, more expressive posterior parametrisation \citep{xiao2021tackling,ou2024improving}, consistency and distillation-based methods \citep{salimans2022progressive,luo2023diff,zhou2024score,zhang2025towards,wang2025vardiu,chen2025diffratio}.
In parallel, aligning diffusion models with human preferences or task-specific objectives has emerged as an important research direction.
Early work focuses on classifier guidance \citep{dhariwal2021diffusion} and classifier-free guidance \citep{ho2022classifier}, which steer generation using auxiliary models at inference time.
Additionately, alternative methods formulate alignment as a probability inference problem, casting alignment as sampling from a reward-tilted distribution \citep{he2025crepe,zhang2025accelerated}. 
In our paper, we consider more recent methods that directly incorporate preference learning into training.

\textbf{Reward Finetuning for Diffusion Alignment.}
Beyond the guidance methods, recent efforts on diffusion alignment focus on finetuning pretrained models to maximise expected reward.
One line of research directly performs optimisation by backpropagating through the entire sampling trajectory \citep{clark2023directly, prabhudesai2023aligning}, which requires differentiable reward signals.
Another class of methods reframes alignment as a likelihood-based objective, applying maximum likelihood estimation to model-generated samples that are reweighted according to their rewards \citep{lee2023aligning, dong2023raft}.
Preference-based learning has also been adapted to diffusion models by extending direct preference optimisation with paired human preferences data \citep{rafailov2023direct}. However, unlike autoregressive models with tractable likelihoods, diffusion models require additional approximations for likelihood evaluation and loss approximations \citep{wallace2024diffusion, liang2024step, yuan2024self}.
A separate strand of work interprets the diffusion sampling procedure as a Markov decision process and applies reinforcement learning techniques to perform alignment \citep{black2023training, fan2023dpok, liu2025flow, xue2025dancegrpo, li2025mixgrpo}.
Beyond standard RL formulations, alternative perspectives have been explored, including stochastic optimal control \citep{uehara2024fine, domingo2024adjoint, potaptchik2026meta, potaptchik2025tilt} and GFlowNet-based objectives \citep{zhang2024improving, liu2024efficient}.
More recently, DiffusionNFT \citep{zheng2025diffusionnft} demonstrates strong sample efficiency by eliminating the need for likelihood estimation and SDE-based reverse process.
In contrast to these approaches, the proposed VMPO method offers a fundamentally different viewpoint on diffusion alignment, grounding optimisation in Sequential Monte Carlo and explicitly minimising the variance of importance weights to better align the model with the reward function.

\textbf{Sequential Monte Carlo for Generative Modelling.}
SMC \citep{del2006sequential} constitutes a well-established class of probabilistic inference techniques, providing adaptable and efficient sampling strategies across a variety of settings, including Bayesian experimental design \citep{ryan2016review} and particle filtering \citep{johansen2009tutorial}.
More recently, SMC has been combined with diffusion-based generative models \citep{chen2025neural, ou2025discrete, he2025rne, skreta2025feynman, wu2025reverse}, resulting in neural sampling frameworks that enable effective approximation of complex Boltzmann distributions. These hybrid approaches have been successfully applied to posterior inference \citep{dou2024diffusion, cardoso2023monte} as well as reward-guided alignment \citep{wu2023practical, singhal2025general}, all without requiring additional model training.
Beyond diffusion models, SMC has also been explored as a tool for improving large language models. In particular, \citet{zhao2024probabilistic} propose SMC as a principled inference framework to mitigate capability and safety challenges in LLMs, a direction that has since been extended to test-time scaling for mathematical reasoning \citep{feng2024step, puri2025probabilistic}.
Our work is most closely related to recent efforts that learn optimal proposal distributions for SMC in discrete diffusion models \citep{ou2025inference}.
While it also adopts variance minimisation as a learning objective, VMPO offers a more comprehensive study and understanding of variance minimisation, revealing connections to various previous alignment methods through different choices of potential functions and variance minimisation objectives.

\section{Limitation and Future Work} \label{sec:appendix-limitation-future-work}

As established in \cref{sec:appendix-holistic-vmpo}, VMPO provides a holistic framework that unifies various diffusion alignment methods through different potential functions and variance minimisation objectives.
This discloses several avenues for further exploration. 
One limitation of this paper is that we do not exhaustively discuss which specific variant serves as the optimal choice across different scenarios. 
While VMPO-Diff demonstrated strong empirical results, the design space for potentials is vast, and future research could systematically benchmark these variants against a broader range of baselines to identify their respective strengths in different alignment scenarios.

Furthermore, our empirical evaluation primarily focuses on text-to-image models like Stable Diffusion v1.5 and v3.5-M. Expanding this framework to other modalities, such as video generation \citep{kong2024hunyuanvideo,wan2025wan}, and to more recently advanced models, like Qwen-Image \citep{wu2025qwen}, Z-image \citep{cai2025z}, presents a promising direction for future work.
Additionally, our results indicate that VMPO, like other alignment methods, remains susceptible to reward hacking, as gains in preference scores often coincide with a reduction in visual diversity. Future research could investigate more robust regularisation strategies beyond standard KL penalties to better preserve the original data distribution while achieving effective reward alignment.
Finally, applying VMPO to align one-step generative models can be an interesting direction for future research.

\newpage

\begin{figure}[!t]
    \vspace{-2mm}
    \centering
    \begin{minipage}{.99\linewidth}
        \centering
        \begin{minipage}{.245\linewidth}\centering SD-1.5 \end{minipage}
        \hfill
        \begin{minipage}{.245\linewidth}\centering GRPO \end{minipage}
        \hfill
        \begin{minipage}{.245\linewidth}\centering VMPO-R2G \end{minipage}
        \hfill
        \begin{minipage}{.245\linewidth}\centering VMPO-Diff \end{minipage}
    \end{minipage}
    \begin{minipage}{0.99\linewidth}
        \centering
        \begin{minipage}{0.245\linewidth}
            \centering
            \includegraphics[width=.99\linewidth]{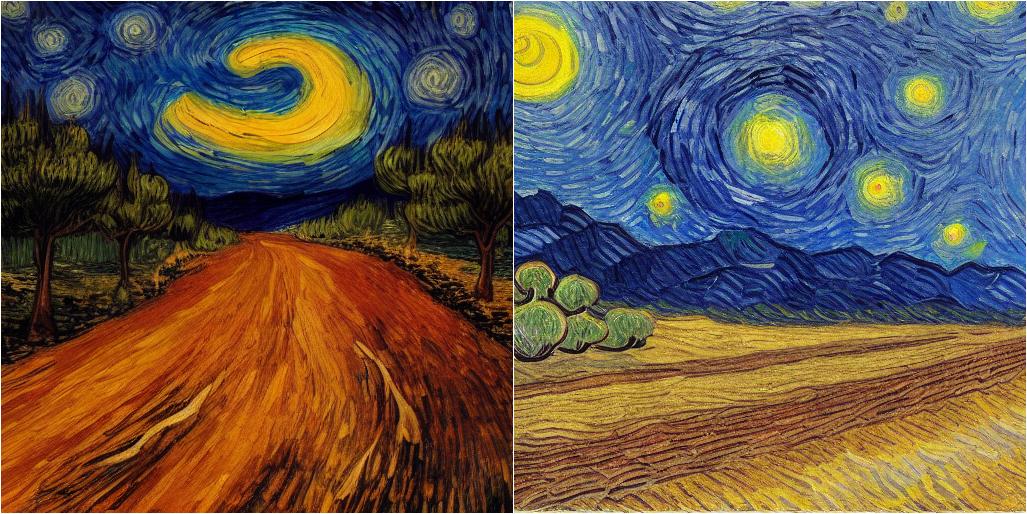}
        \end{minipage}
        \hfill
        \begin{minipage}{0.245\linewidth}
            \centering
            \includegraphics[width=.99\linewidth]{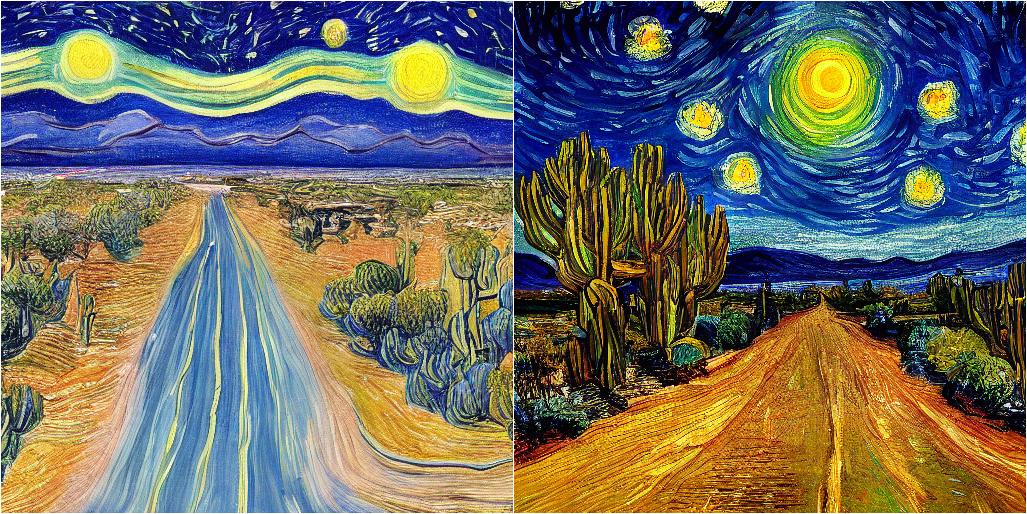}
        \end{minipage}
        \hfill
        \begin{minipage}{0.245\linewidth}
            \centering
            \includegraphics[width=.99\linewidth]{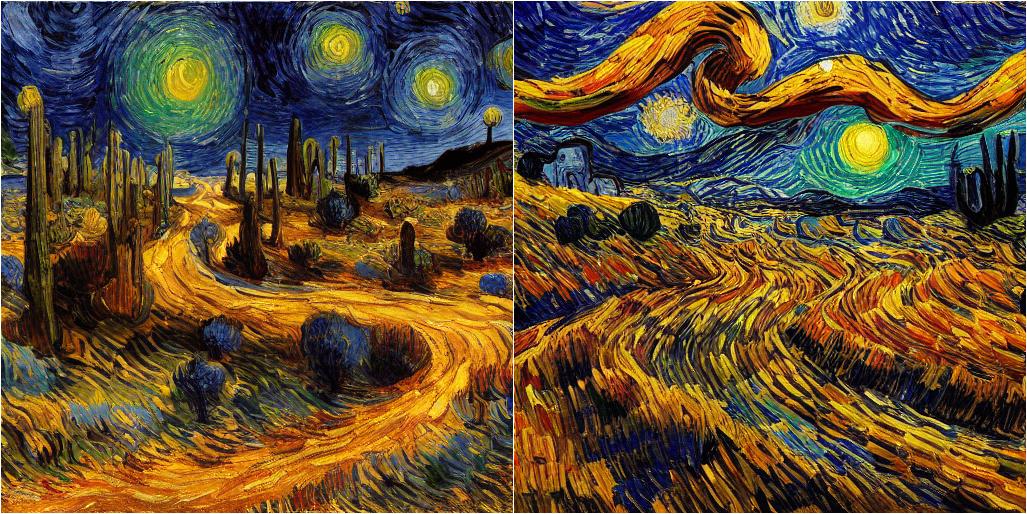}
        \end{minipage}
        \hfill
        \begin{minipage}{0.245\linewidth}
            \centering
            \includegraphics[width=.99\linewidth]{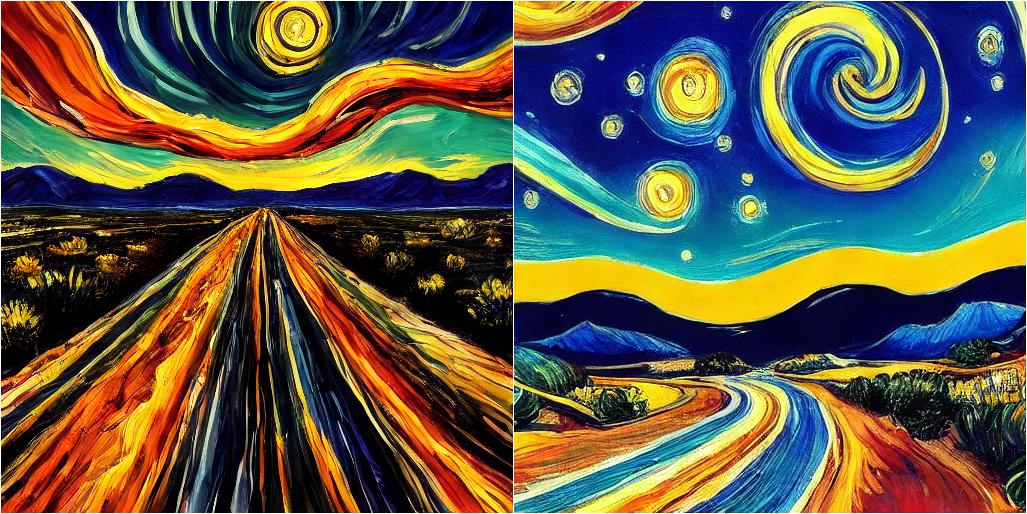}
        \end{minipage}
        \begin{minipage}{1.\linewidth}
            \smallskip
            \centering
            {\footnotesize A Van Gogh–style painting of a desert highway at night under a swirling starry sky.\par}
        \end{minipage}
    \end{minipage}
    \\ \vspace{.5mm}
    \begin{minipage}{0.99\linewidth}
        \centering
        \begin{minipage}{0.245\linewidth}
            \centering
            \includegraphics[width=.99\linewidth]{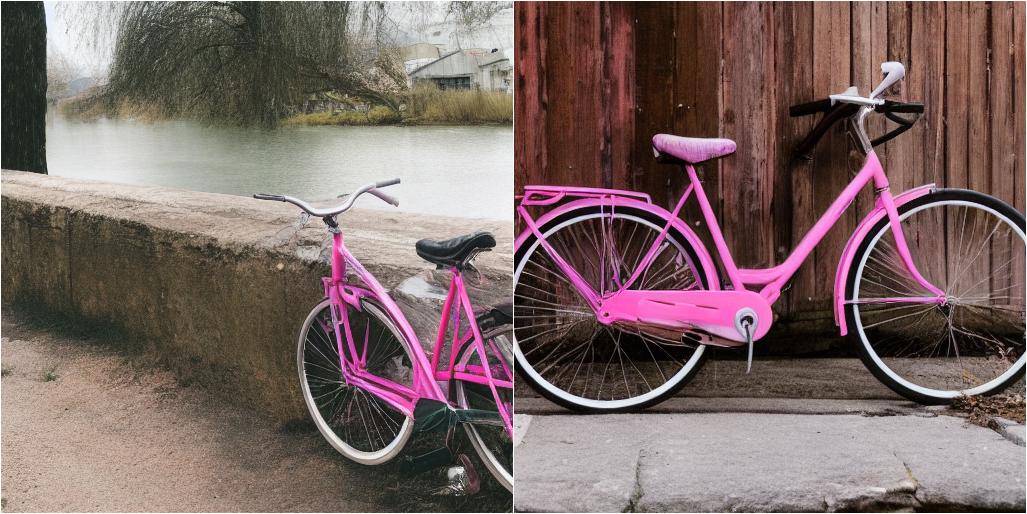}
        \end{minipage}
        \hfill
        \begin{minipage}{0.245\linewidth}
            \centering
            \includegraphics[width=.99\linewidth]{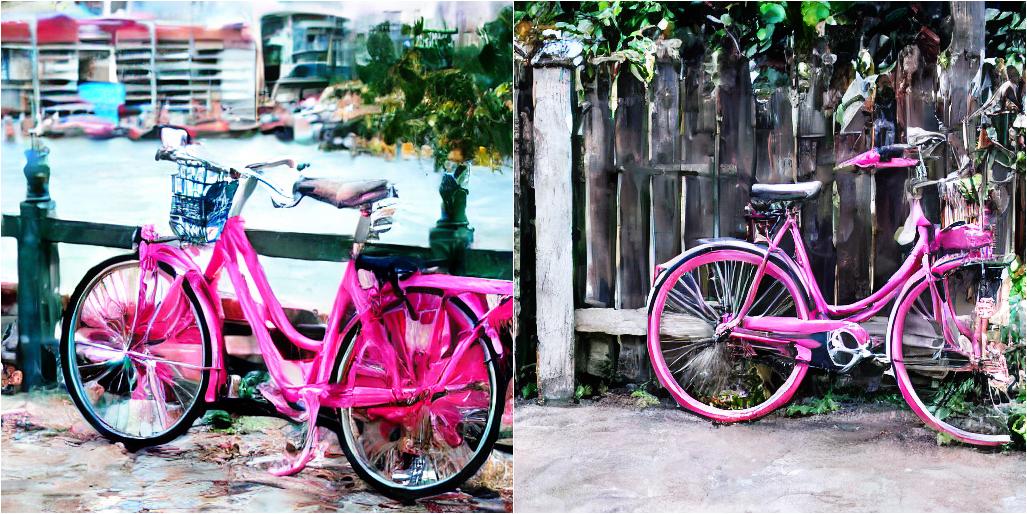}
        \end{minipage}
        \hfill
        \begin{minipage}{0.245\linewidth}
            \centering
            \includegraphics[width=.99\linewidth]{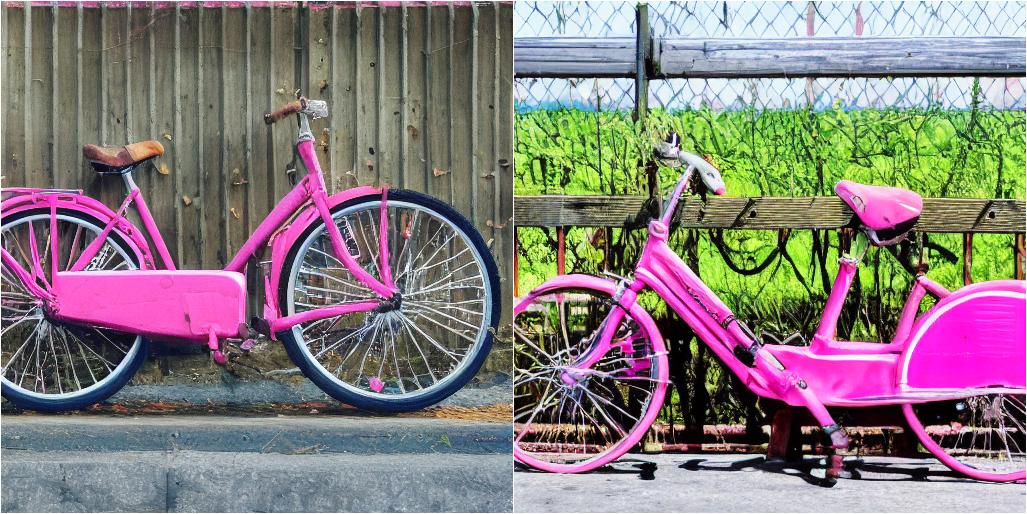}
        \end{minipage}
        \hfill
        \begin{minipage}{0.245\linewidth}
            \centering
            \includegraphics[width=.99\linewidth]{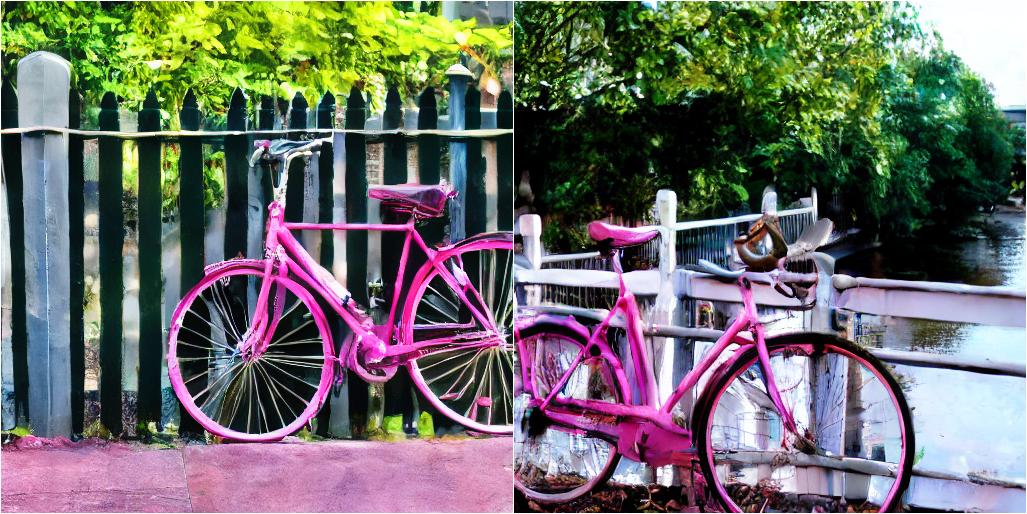}
        \end{minipage}
        \begin{minipage}{1.\linewidth}
            \smallskip
            \centering
            {\footnotesize A pink bicycle leaning against a fence near a river.\par}
        \end{minipage}
    \end{minipage}
    \\ \vspace{.5mm}
    \begin{minipage}{0.99\linewidth}
        \centering
        \begin{minipage}{0.245\linewidth}
            \centering
            \includegraphics[width=.99\linewidth]{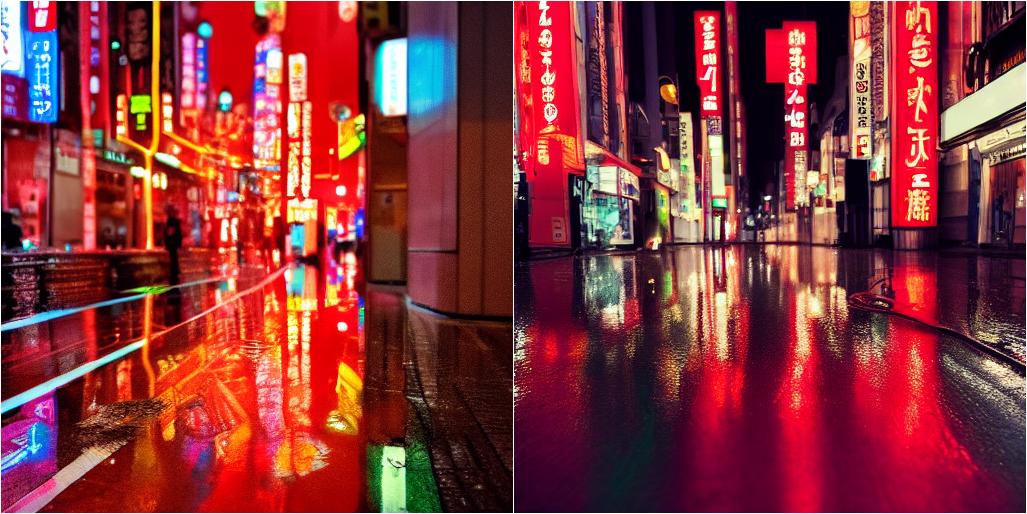}
        \end{minipage}
        \hfill
        \begin{minipage}{0.245\linewidth}
            \centering
            \includegraphics[width=.99\linewidth]{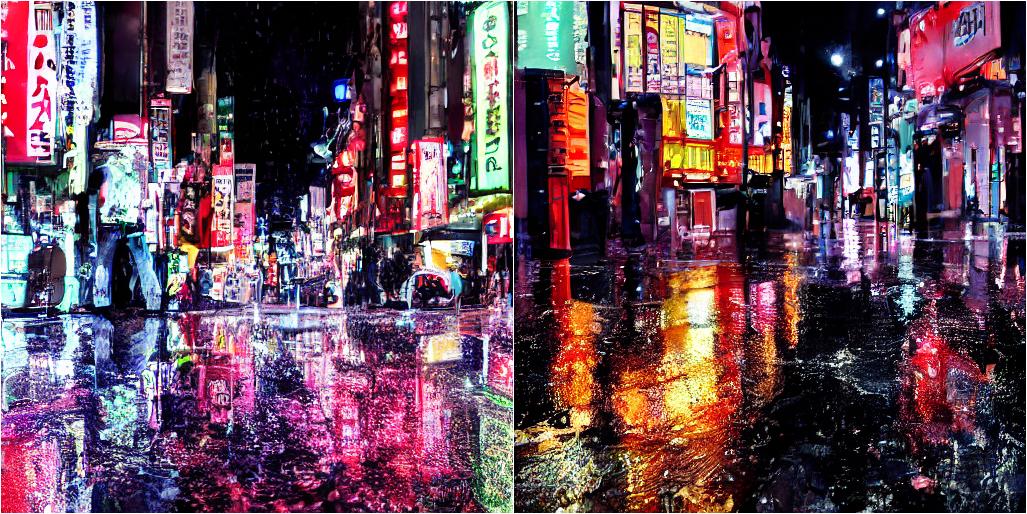}
        \end{minipage}
        \hfill
        \begin{minipage}{0.245\linewidth}
            \centering
            \includegraphics[width=.99\linewidth]{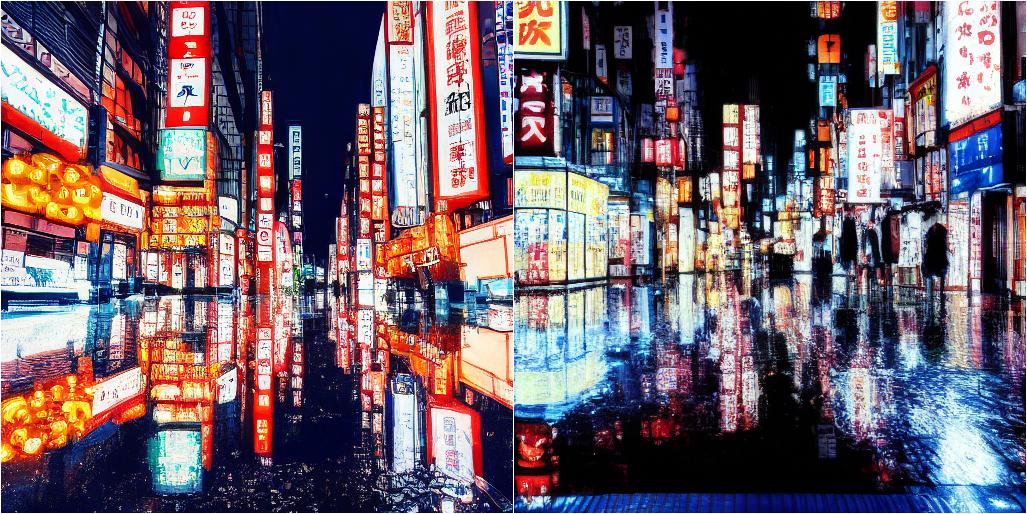}
        \end{minipage}
        \hfill
        \begin{minipage}{0.245\linewidth}
            \centering
            \includegraphics[width=.99\linewidth]{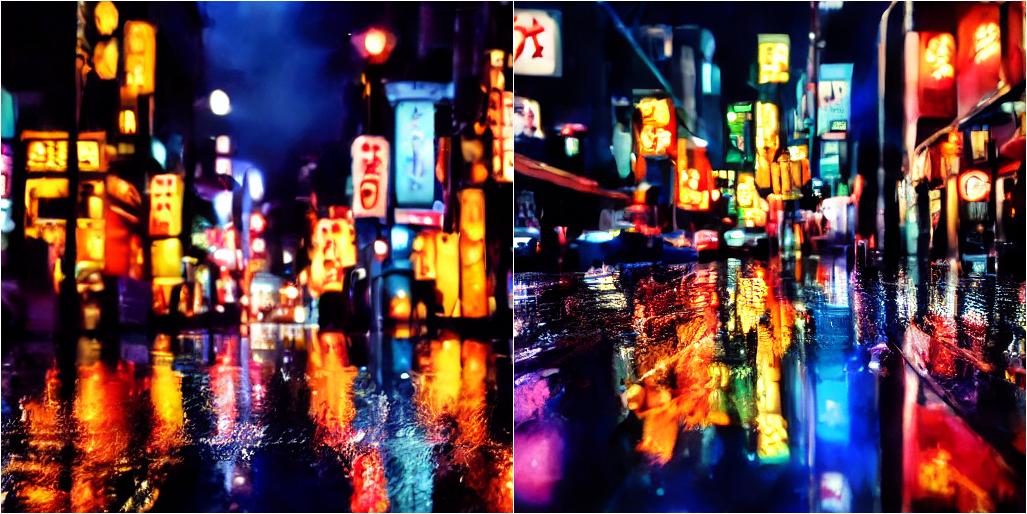}
        \end{minipage}
        \begin{minipage}{1.\linewidth}
            \smallskip
            \centering
            {\footnotesize A pink bicycle leaning against a fence near a river.\par}
        \end{minipage}
    \end{minipage}
    \\ \vspace{.5mm}
    \begin{minipage}{0.99\linewidth}
        \centering
        \begin{minipage}{0.245\linewidth}
            \centering
            \includegraphics[width=.99\linewidth]{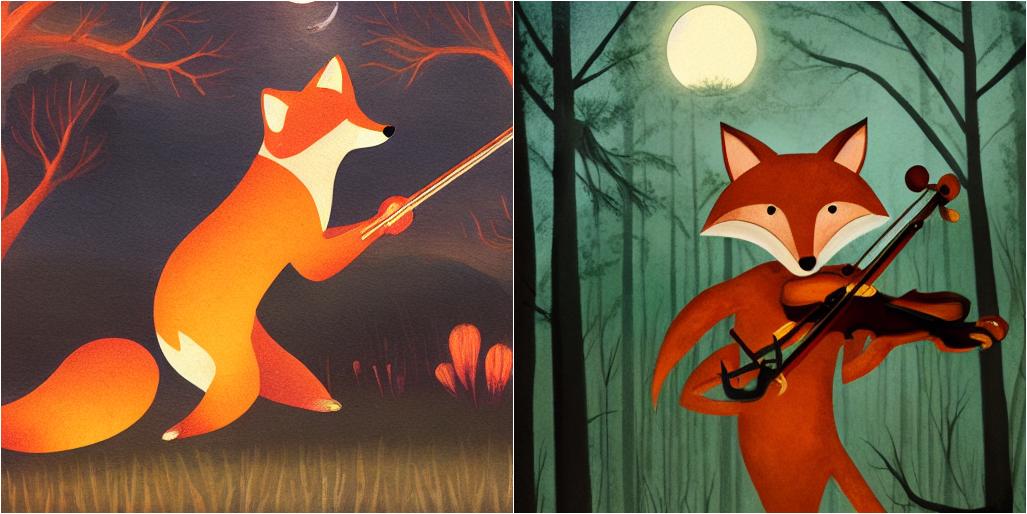}
        \end{minipage}
        \hfill
        \begin{minipage}{0.245\linewidth}
            \centering
            \includegraphics[width=.99\linewidth]{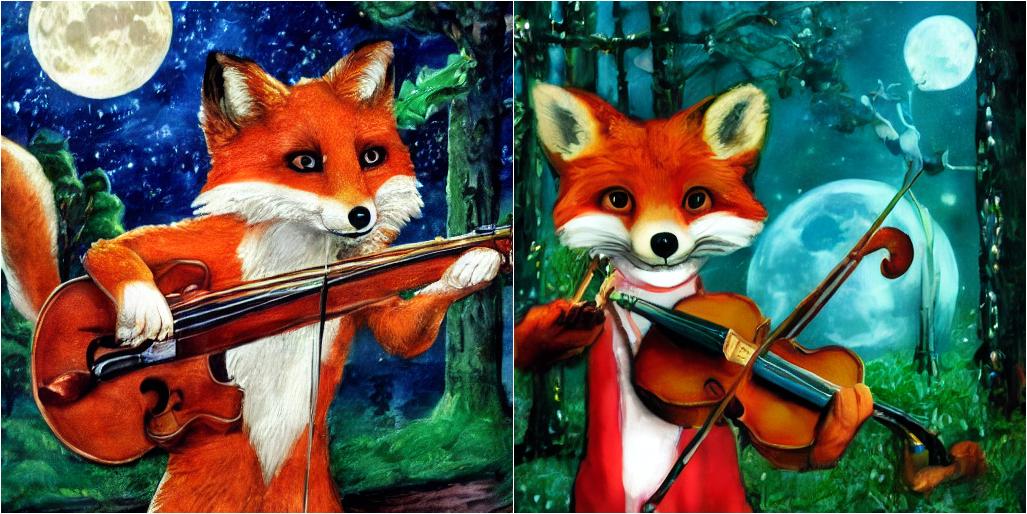}
        \end{minipage}
        \hfill
        \begin{minipage}{0.245\linewidth}
            \centering
            \includegraphics[width=.99\linewidth]{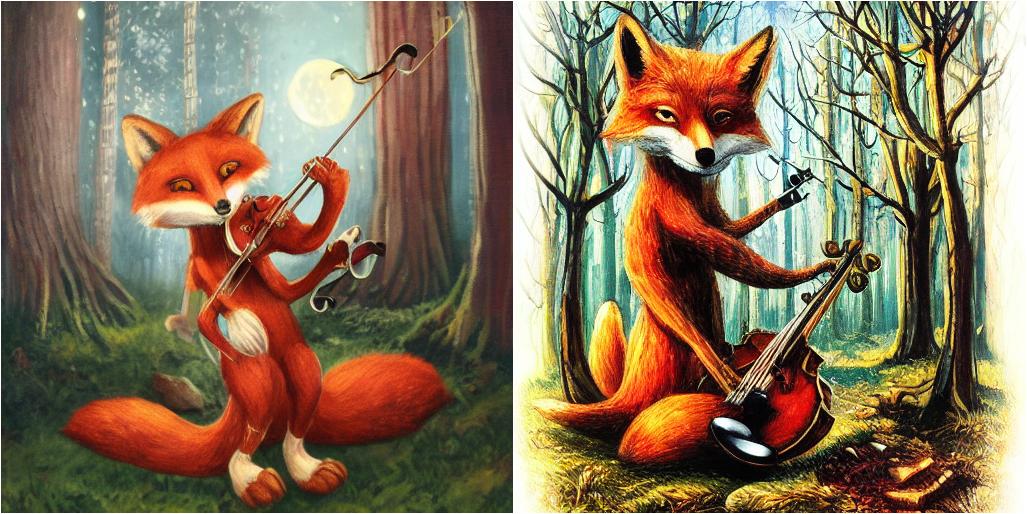}
        \end{minipage}
        \hfill
        \begin{minipage}{0.245\linewidth}
            \centering
            \includegraphics[width=.99\linewidth]{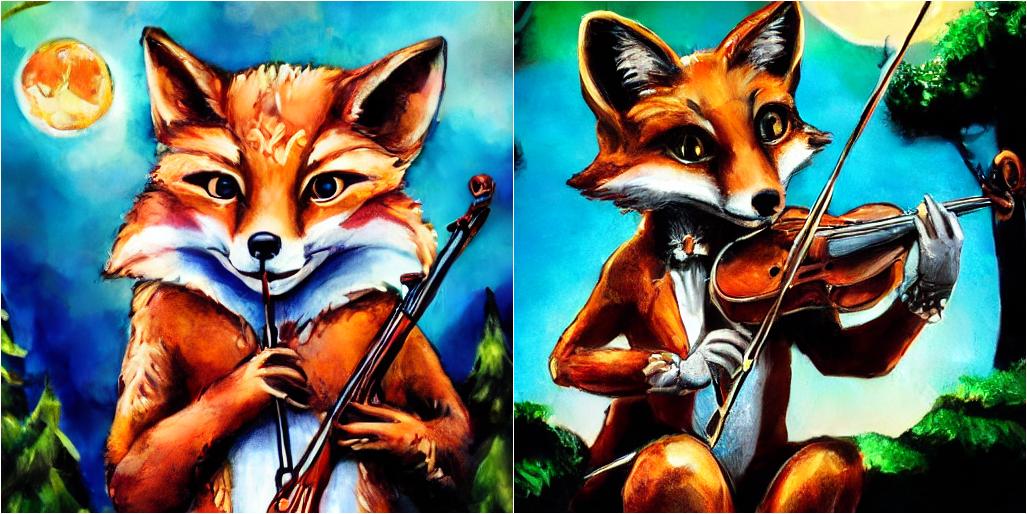}
        \end{minipage}
        \begin{minipage}{1.\linewidth}
            \smallskip
            \centering
            {\footnotesize An anthropomorphic fox playing a violin in a moonlit forest.\par}
        \end{minipage}
    \end{minipage}
    \vspace{-2mm}
    \caption{Illustration of the generated samples of different models.}
    \label{fig:hps-samples-comp-appendix}
    \vspace{-4mm}
\end{figure}

\begin{figure}[!t]
    \centering
    \begin{minipage}{.99\linewidth}
        \centering
        \begin{minipage}{.32\linewidth}\centering SD-3.5-M \end{minipage}
        \hfill
        \begin{minipage}{.32\linewidth}\centering GRPO \end{minipage}
        \hfill
        \begin{minipage}{.32\linewidth}\centering VMPO \end{minipage}
    \end{minipage}
    \begin{minipage}{0.95\linewidth}
        \centering
        \begin{minipage}{0.32\linewidth}
            \centering
            \includegraphics[width=.49\linewidth]{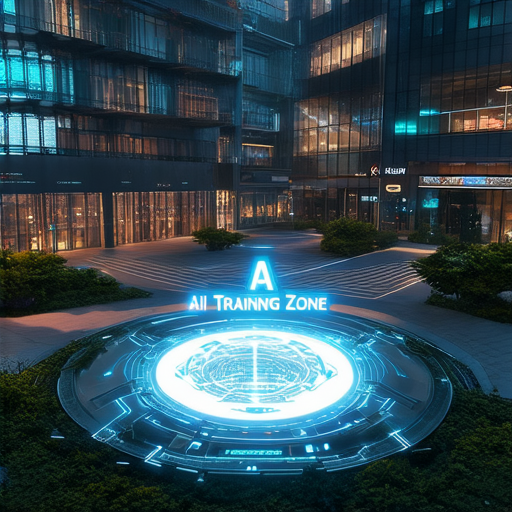}
            \includegraphics[width=.49\linewidth]{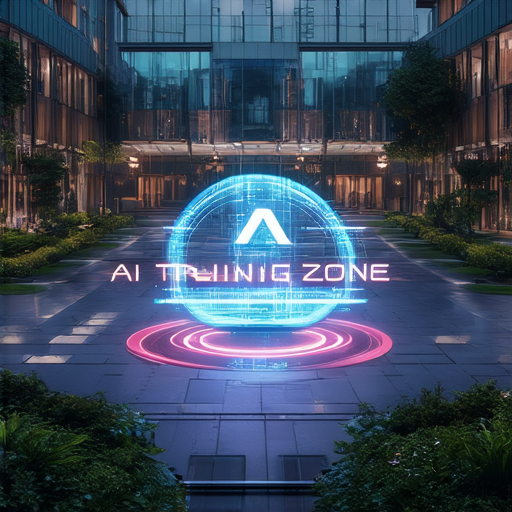}
        \end{minipage}
        \hfill
        \begin{minipage}{0.32\linewidth}
            \centering
            \includegraphics[width=.49\linewidth]{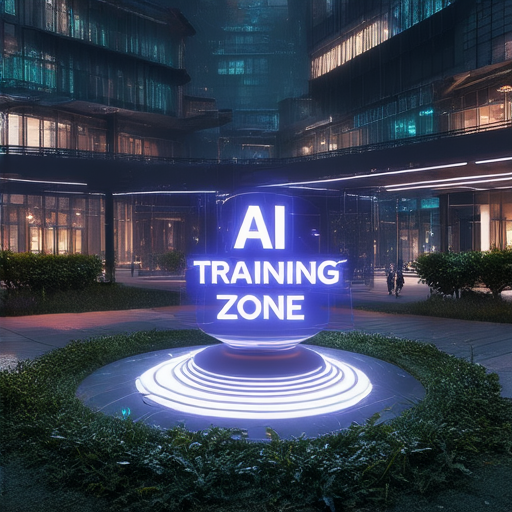}
            \includegraphics[width=.49\linewidth]{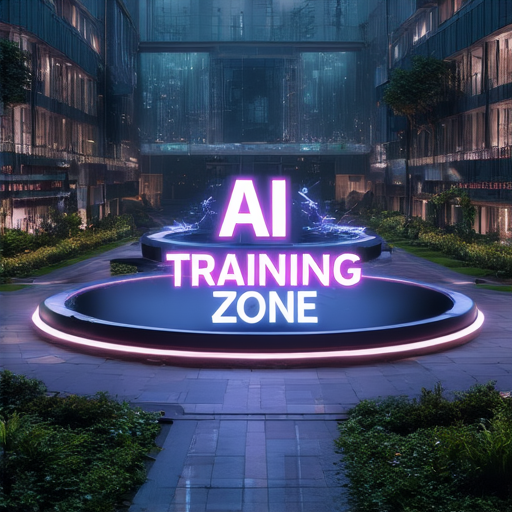}
        \end{minipage}
        \hfill
        \begin{minipage}{0.32\linewidth}
            \centering
            \includegraphics[width=.49\linewidth]{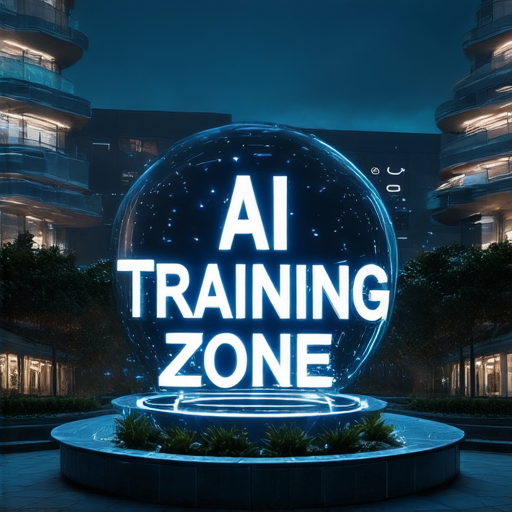}
            \includegraphics[width=.49\linewidth]{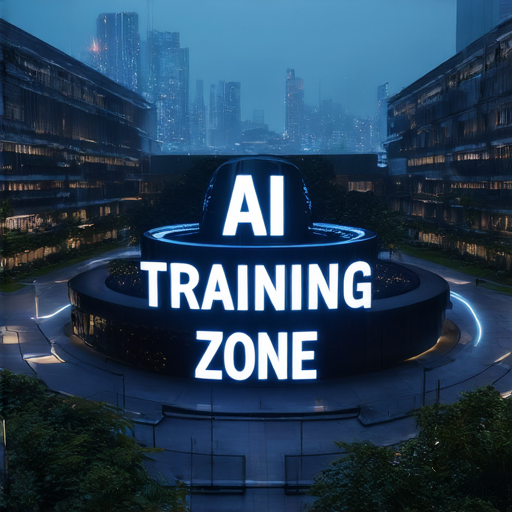}
        \end{minipage}
        \begin{minipage}{1.\linewidth}
            \smallskip
            \centering
            {\scriptsize A realistic photo of a tech campus courtyard at night, featuring a glowing {\color{red}``A I Training Zone''} hologram floating in the center, surrounded by futuristic buildings and greenery, with soft ambient lighting enhancing the futuristic atmosphere.\par}
        \end{minipage}
    \end{minipage}
    \\ \vspace{1mm}
    \begin{minipage}{0.95\linewidth}
        \centering
        \begin{minipage}{0.32\linewidth}
            \centering
            \includegraphics[width=.49\linewidth]{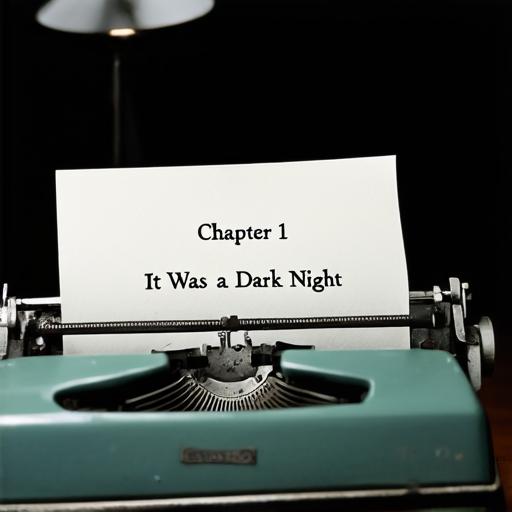}
            \includegraphics[width=.49\linewidth]{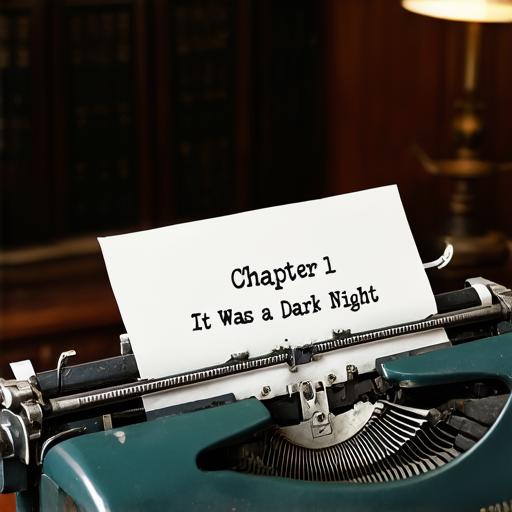}
        \end{minipage}
        \hfill
        \begin{minipage}{0.32\linewidth}
            \centering
            \includegraphics[width=.49\linewidth]{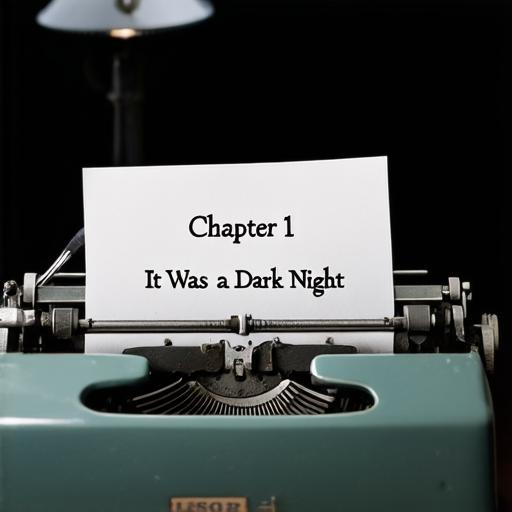}
            \includegraphics[width=.49\linewidth]{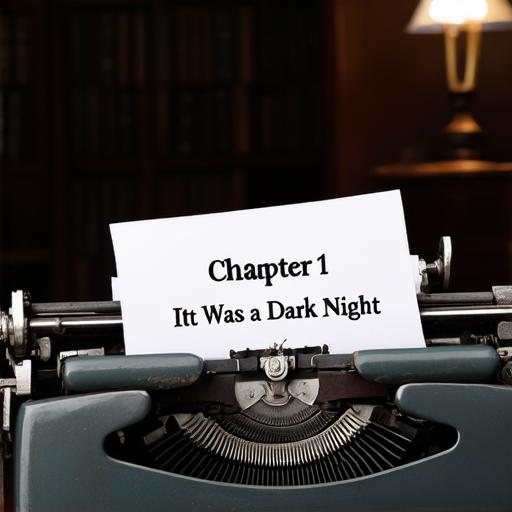}
        \end{minipage}
        \hfill
        \begin{minipage}{0.32\linewidth}
            \centering
            \includegraphics[width=.49\linewidth]{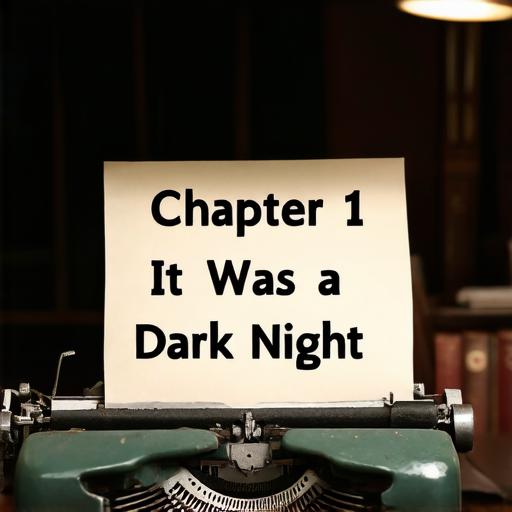}
            \includegraphics[width=.49\linewidth]{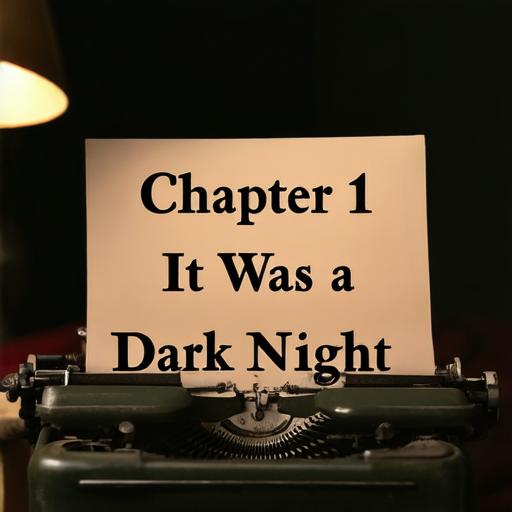}
        \end{minipage}
        \begin{minipage}{1.\linewidth}
            \smallskip
            \centering
            {\scriptsize An antique typewriter with a sheet of paper inserted, prominently displaying the typed words: {\color{red}``Chapter 1 It Was a Dark Night''}. The scene is set in a dimly lit, vintage study with a single desk lamp casting a warm glow over the typewriter.\par}
        \end{minipage}
    \end{minipage}
    \\ \vspace{1mm}
    \begin{minipage}{0.95\linewidth}
        \centering
        \begin{minipage}{0.32\linewidth}
            \centering
            \includegraphics[width=.49\linewidth]{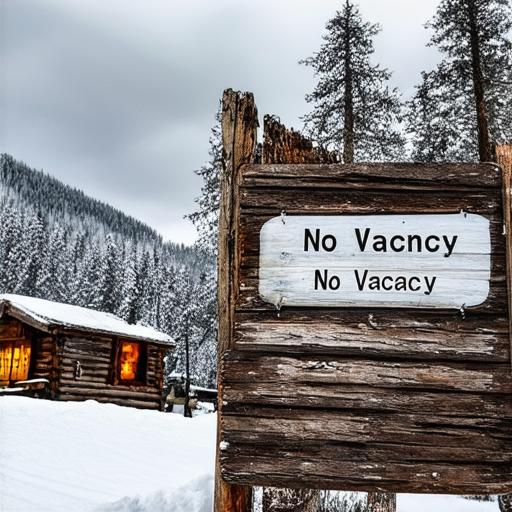}
            \includegraphics[width=.49\linewidth]{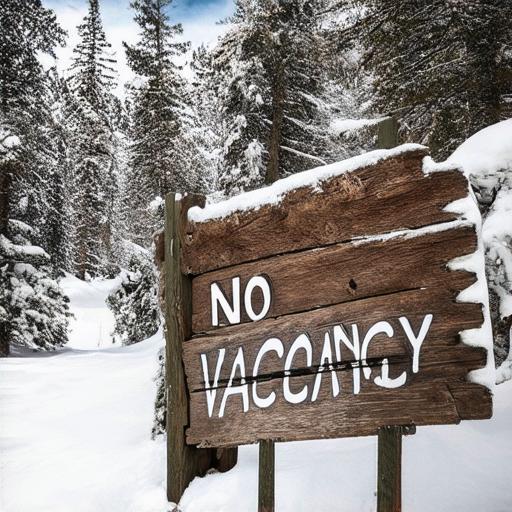}
        \end{minipage}
        \hfill
        \begin{minipage}{0.32\linewidth}
            \centering
            \includegraphics[width=.49\linewidth]{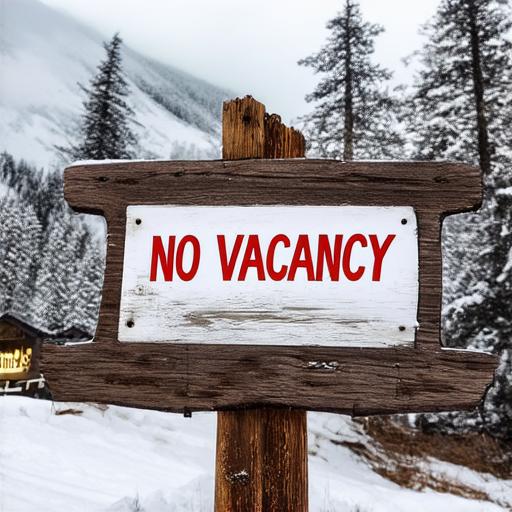}
            \includegraphics[width=.49\linewidth]{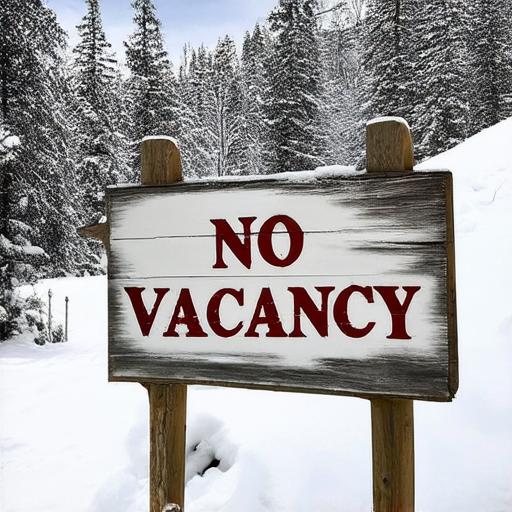}
        \end{minipage}
        \hfill
        \begin{minipage}{0.32\linewidth}
            \centering
            \includegraphics[width=.49\linewidth]{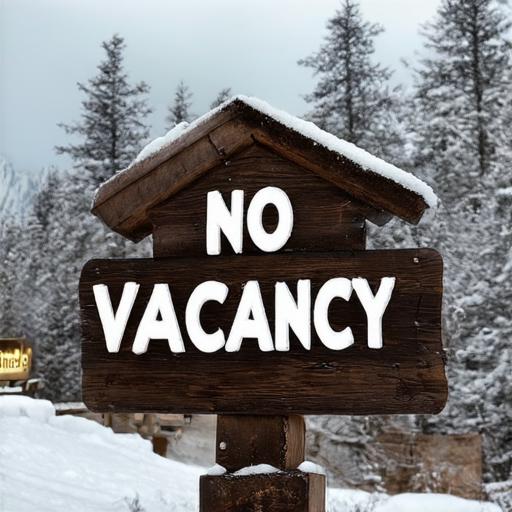}
            \includegraphics[width=.49\linewidth]{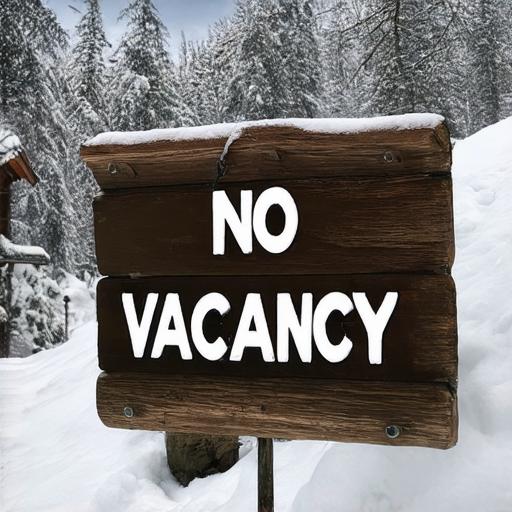}
        \end{minipage}
        \begin{minipage}{1.\linewidth}
            \smallskip
            \centering
            {\scriptsize A weathered wooden sign outside a mountain lodge reading {\color{red}``No Vacancy''}, with snow-covered trees and a cloudy sky in the background.\par}
        \end{minipage}
    \end{minipage}
    \\ \vspace{1mm}
    \begin{minipage}{0.95\linewidth}
        \centering
        \begin{minipage}{0.32\linewidth}
            \centering
            \includegraphics[width=.49\linewidth]{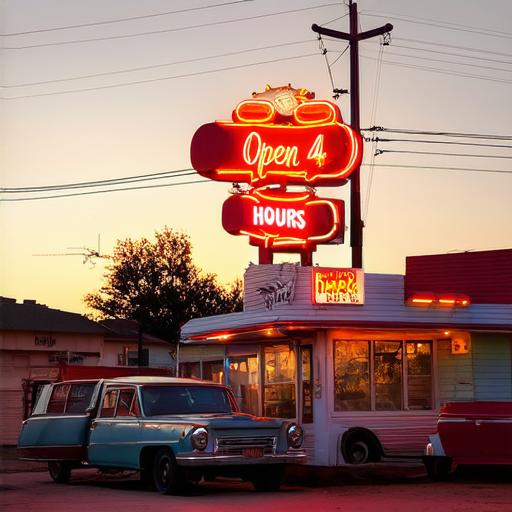}
            \includegraphics[width=.49\linewidth]{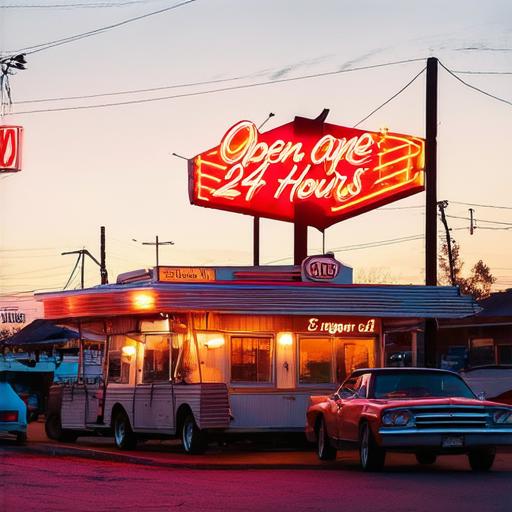}
        \end{minipage}
        \hfill
        \begin{minipage}{0.32\linewidth}
            \centering
            \includegraphics[width=.49\linewidth]{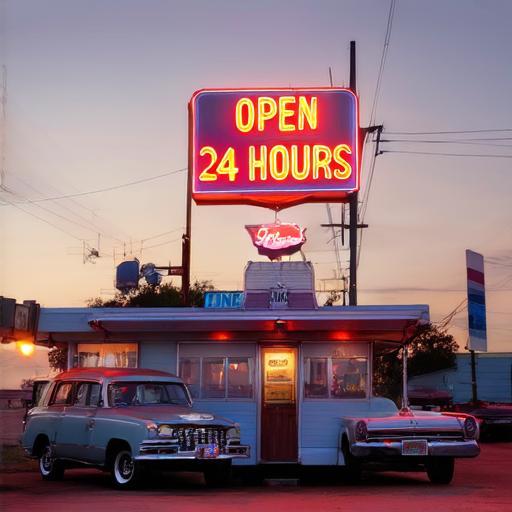}
            \includegraphics[width=.49\linewidth]{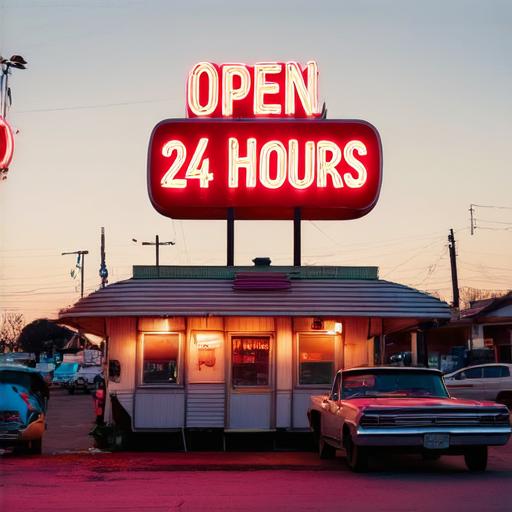}
        \end{minipage}
        \hfill
        \begin{minipage}{0.32\linewidth}
            \centering
            \includegraphics[width=.49\linewidth]{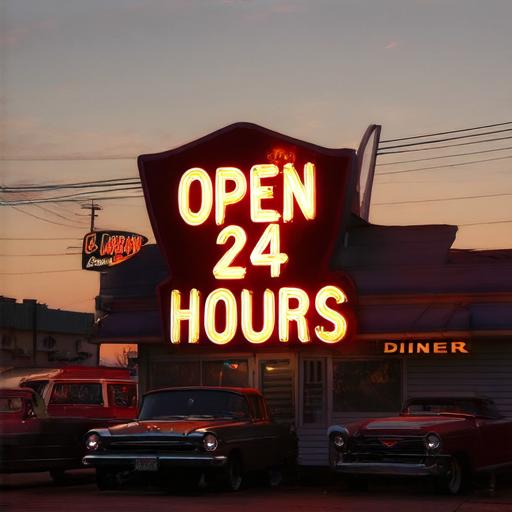}
            \includegraphics[width=.49\linewidth]{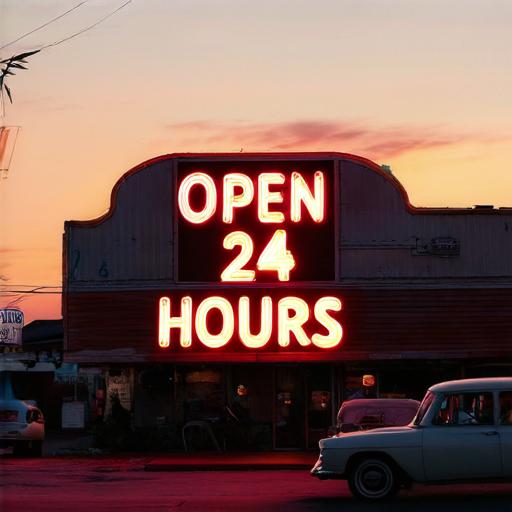}
        \end{minipage}
        \begin{minipage}{1.\linewidth}
            \smallskip
            \centering
            {\scriptsize A vintage roadside diner at sunset, with a glowing neon sign that reads {\color{red}``Open 24 Hours''}, cars parked nearby and a nostalgic atmosphere.\par}
        \end{minipage}
    \end{minipage}
    \caption{Illustration of the generated samples of different models in text rendering.}
    \label{fig:ocr-samples-comp}
\end{figure}